\crefname{hypothesis}{Hypothesis}{Hypotheses} %
\crefname{assumption}{Assumption}{Assumptions} %
\setlist[enumerate]{leftmargin=.5in}
\setlist[itemize]{leftmargin=.5in}
\title{Normal-bundle Bootstrap\thanks{Submitted to the editors.
\funding{This work was supported, in part, by the National Science Foundation grant DMS-1638521.}}}
\author{Ruda Zhang\thanks{The Statistical and Applied Mathematical Sciences Institute, Durham, NC 
    (\email{rzhang@samsi.info}).
    Department of Mathematics, North Carolina State University, Raleigh, NC
    (\email{rzhang27@ncsu.edu}).}
  \and Roger Ghanem\thanks{Department of Civil and Environmental Engineering,
    University of Southern California, Los Angeles, CA 
    (\email{ghanem@usc.edu}).}}
\begin{document}

\maketitle

\begin{abstract} %
  Probabilistic models of data sets often exhibit salient geometric structure.
  Such a phenomenon is summed up in the manifold distribution hypothesis,
  and can be exploited in probabilistic learning.
  Here we present normal-bundle bootstrap (NBB), a method that generates new data
  which preserve the geometric structure of a given data set.
  Inspired by algorithms for manifold learning and concepts in differential geometry,
  our method decomposes the underlying probability measure into
  a marginalized measure on a learned data manifold and conditional measures on the normal spaces.
  The algorithm estimates the data manifold as a density ridge,
  and constructs new data by bootstrapping projection vectors and adding them to the ridge.
  We apply our method to the inference of density ridge and related statistics,
  and data augmentation to reduce overfitting.
\end{abstract}

\begin{keywords}
  probabilistic learning, data manifold, dynamical systems, resampling, data augmentation
\end{keywords}

\begin{AMS}
  37M22, 53-08, 53A07, 62F40, 62G09
\end{AMS}

\section{Introduction}
\label{sec:intro}

When data sets are modeled as multivariate probability distributions,
such distributions often have salient geometric structure.
In regression, the joint probability distribution of explanatory and response variables
is centered around the response surface.
In representation learning and deep learning,
a common assumption is the manifold distribution hypothesis,
that natural high-dimensional data concentrate close to a nonlinear low-dimensional manifold
\cite{Bengio2013, Goodfellow2016}.
In topological data analysis, including manifold learning,
the goal is to capture such structures in data
and exploit them in further analysis \cite{Wasserman2018}.

The goal of this paper is to present a method that generates new data,
which preserve the geometric structure of a probability distribution modeling the given data set.
As a variant of the bootstrap resampling method,
it is useful for the inference of statistical estimators.
Our method is also useful for data augmentation,
where one wants to increase training data diversity to reduce overfitting,
without collecting new data.

Our method is inspired by constructions in differential geometry
and algorithms for nonlinear dimensionality reduction.
Principal component analysis of a data set decomposes the Euclidean space of variables
into orthogonal subspaces, in decreasing order of maximal data variance.
If we consider the first few principal components to represent
the geometry of the underlying distribution,
and the remaining components to represent the normal space to the principal component space,
we decompose the distribution into one on the principal component space
and noises in the normal spaces at each point of the principal component space.
Normal bundle of a manifold embedded in a Euclidean space generalizes such linear decomposition,
such that every point in a neighborhood of the manifold can be uniquely represented as
the sum of its projection on the manifold and the projection vector.
There are a few concepts that generalize principal components to nonlinear summaries of data.
Principal curve \cite{Hastie1989} and, more generally, principal manifold
is a smooth submanifold where each point is the expectation of the distribution in its normal space.
More recently, \cite{Ozertem2011} proposed a variant called density ridge,
where each point is the mode of the distribution in a neighborhood in its normal space.
Density ridge is locally defined and is estimated by subspace-constrained mean shift (SCMS),
a gradient descent algorithm.
Compared with principal curve algorithms, the SCMS algorithm is much faster,
applicable to any manifold dimension, robust to outliers, and the ridge is fully learned from data.

Normal-bundle bootstrap (NBB) picks a point on the estimated density ridge and
adds to it the projection vector of a random point,
whose projection is in a neighborhood of the picked point on the ridge.
With this procedure, the distribution on the ridge is preserved,
while distributions in the normal spaces are locally randomized.
Thus, the generated data will have greater diversity and remain consistent
with the original distribution, including its geometric structure.
Our method should work well for data sets in any Euclidean or Hilbert space,
as long as the underlying distribution is concentrated around a low-dimensional submanifold,
and the sample size is sufficient for the manifold dimension.
\Cref{fig:overview}a-c illustrates density ridge, its normal bundle,
and the normal-bundle bootstrap algorithm.

\subsection{Related literature}
\label{sub:review}

Within bootstrap methods,
normal-bundle bootstrap is mostly close to residual bootstrap in regression analysis,
but our method is in the context of dimension reduction.
Residual bootstrap fits a regression model on the data,
and adds random residual in the response variables to each point on the fitted model,
assuming the errors are identically distributed.
Such residuals in our context are the projection vectors.
Because the normal spaces on a manifold are not all parallel in general,
we cannot bootstrap all the projection vectors.
Instead, we only assume that the distributions in the normal spaces are continuously varying
over the density ridge, and bootstrap nearby projection vectors.
Also in regression analysis, wild bootstrap \cite{WuCF1986} allows for heteroscedastic errors,
and bootstraps by flipping the sign of each residual at random,
assuming error distributions are symmetric.
Such assumption does not apply in our context,
because each point on the density ridge is the mode of the distribution on a normal disk,
which can be asymmetric and biased in general.

In probabilistic learning on manifolds,
\cite{Soize2016} proposed a Markov chain Monte Carlo (MCMC) sampler to generate new data sets,
which preserve the concentration of probability measure estimated from the original data set
\cite{Soize2020b} and have applications in uncertainty quantification \cite{ZhangRD2020EnvEcon}.
This paper handles the same problem, but explicitly estimates the manifold by the density ridge,
and generates new data by bootstrapping, which avoids the computational cost of MCMC sampling.

There is a large literature at the broad intersection of differential geometry and statistics.
For parametric statistics on special manifolds with analytic expression,
which includes directional statistics, see \cite{Chikuse2003}.
For nonparametric statistical theory on manifolds and its applications,
especially for shape and object data, see \cite{Bhattacharya2012} and \cite{Patrangenaru2015}.
Statistical problems on submanifolds defined by implicit functions
are studied recently in \cite{ChenYC2020}.

Several MCMC methods have been proposed to sample from probability distributions
on Riemannian submanifolds:
\cite{Brubaker2012} proposed a general constrained framework of Hamiltonian Monte Carlo (HMC)
methods for manifolds defined by implicit constraints;
\cite{Byrne2013} proposed a similar HMC method,
but for manifolds with explicit forms of tangent spaces and geodesics.

The machine learning and deep learning communities also have various methods for
estimating and sampling from probability densities with salient geometric structures.
Manifold Parzen windows (MParzen) algorithm \cite{Vincent2002}
is a kernel density estimation method which captures the data manifold structure.
The estimated density function is easy to sample from, and we compare it with our method. %
Denoising auto-encoder \cite{Bengio2013b} is a feed-forward neural network
that implicitly estimates the data-generating distribution,
and can sample from the learned model by running a Markov chain
that adds noise and samples from the learned denoised distribution iteratively.
Normalizing flow \cite{Papamakarios2019} is a deep neural network
that represents a parametric family of probabilistic models,
which is the outcome of a simple distribution mapped through
a sequence of simple, invertible, differentiable transformations.
It can be used for density estimation, sampling, simulation, and parameter estimation.

\begin{figure}[t]
  \centering
  \includegraphics[width=\linewidth]{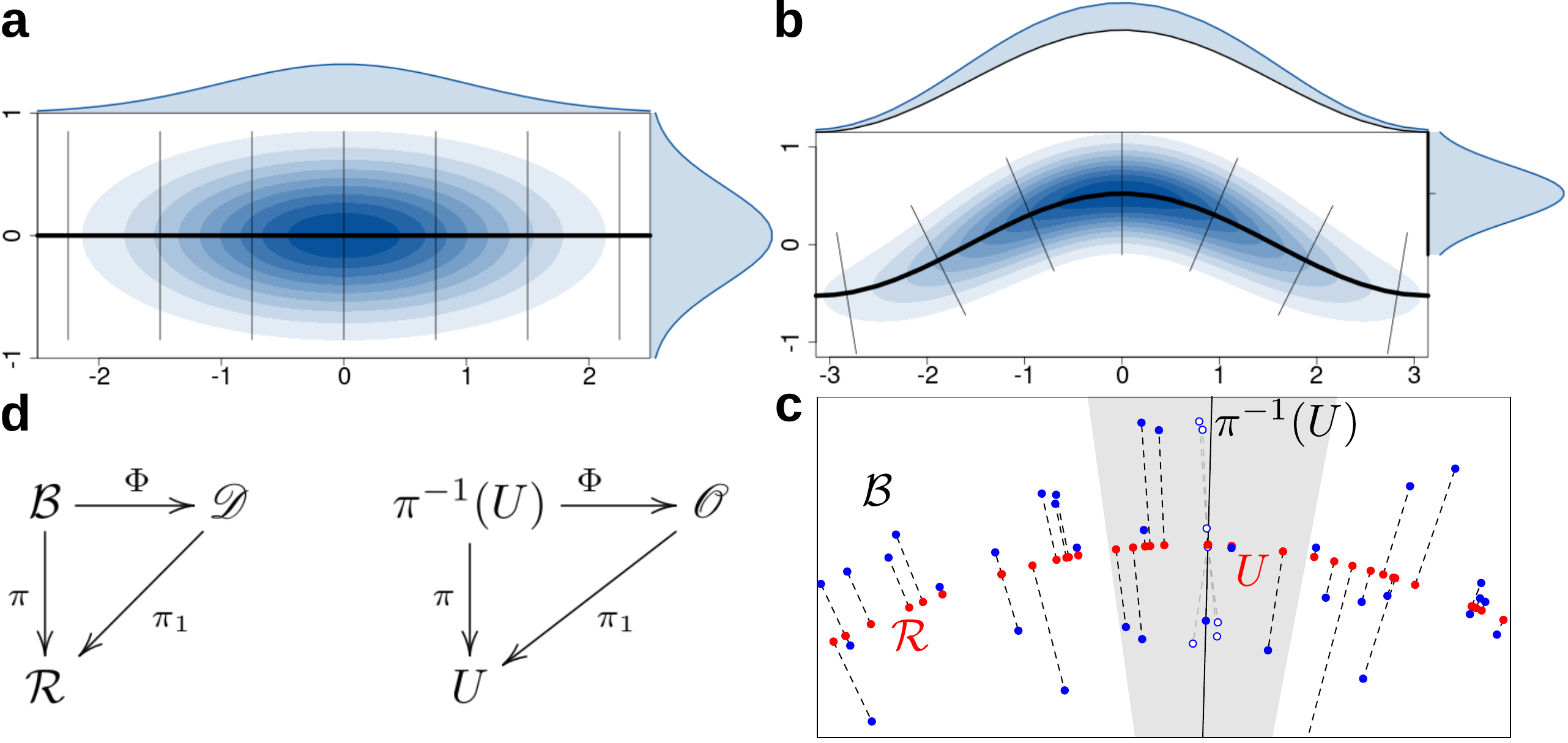}
  \caption{Density ridge, normal bundle, and NBB.
    (a) For a 2d Gaussian PDF (blue contours),
    its 1d density ridge (bold line) is its 1st principal component line,
    where the normal spaces (thin lines) are parallel to the 2nd principal component.
    Probability density on the normal spaces (right margin)
    declines faster than that on the ridge (top margin).
    (b) In general, density ridges are nonlinear, and its normal bundle decomposes
    the original distribution into one on the ridge and one on each normal space.
    (c) The NBB algorithm moves data points (solid blue) to the ridge (red)
    and for each point on the ridge,
    picks neighboring points on the ridge (shaded segment) and
    adds the projection vectors (dashed line) to construct new data points (hollow blue).
    Using a smooth frame can keep the constructed points in the normal space.
    (d) Commutative diagrams of normal bundle:
    $\mathcal{B}$, basin of attraction; $\mathcal{R}$, density ridge;
    $\pi$, projection; $U$, a neighborhood in density ridge.}
  \label{fig:overview}
\end{figure}

\section{Mathematics:
  geometric decomposition of Euclidean spaces and probability measures}
\label{sec:math}

Consider a probability measure $\mu$ on the Euclidean space $\mathbb{R}^n$,
which has a probability density function (PDF) $p$.
Given a data set $X$ which is a random sample of size $N$ from $\mu$,
we want to generate new data that are distinct from $X$, but consistent with $\mu$.
In particular, we want to solve this problem more efficiently
by exploiting the geometric structure of $p$,
which may be represented by a submanifold $\mathcal{R}$ of $\mathbb{R}^n$.
The mathematical foundation of our method is to decompose $\mathbb{R}^n$
into a collection $\{\mathcal{F}_r\}_{r \in \mathcal{R}}$ of submanifolds
indexed by points in $\mathcal{R}$,
where each submanifold $\mathcal{F}_r$ intersects $\mathcal{R}$ at $r$ orthogonally.
In this way, $\mu$ also gets decomposed into probability measures on submanifolds
$\mathcal{R}$ and each $\mathcal{F}_r$.

\begin{definition}
  \label{def:ridge}
  Ridge of dimension $d \in \{0, \dots, n\}$ for a twice differentiable function
  $f: \mathbb{R}^n \mapsto \mathbb{R}$, denoted as $\text{Ridge}(f, d)$,
  is the set of points where the $c = n - d$ smallest eigenvalues of the Hessian are negative,
  and the span of their eigenspaces are orthogonal to the gradient:
  $\text{Ridge}(f, d) := \{x \in \mathbb{R}^n : \lambda_c < 0, L g = 0\}$.
  Here, Hessian $H = \nabla \nabla f$ has an eigen-decomposition $H = V \Lambda V^{\text{T}}$,
  where $\Lambda = \text{diag}(\lambda)$ and $\lambda = (\lambda_i)_{i=1}^n$ is in increasing order.
  Let $V = (V_c; V_d)$ where $V_c$ and $V_d$ are column matrices
  of $c$ and $d$ eigenvectors respectively.
  Denote projection matrices $U = V_d V_d^{\text{T}}$, $L = V_c V_c^{\text{T}} = I - U$,
  and gradient $g = \nabla f$.
\end{definition}

\begin{assumption}
  \label{ass:basic}
  Let $D = \{x \in \mathbb{R}^n : p(x) > 0\}$, assume that:
  (1) $p|_D \in C^2(D, \mathbb{R}_{> 0})$;
  (2) for some $d \in \{1, \dots, n-1\}$,
  $\text{Ridge}(p, d) \subset D$ is an embedded $d$-dimensional submanifold of $\mathbb{R}^n$.
\end{assumption}

\textit{Density ridge} \cite{Ozertem2011} is a ridge of a probability density function.
With \cref{ass:basic},
(1) guarantees that $\text{Ridge}(p, d)$ is well-defined for every $d \in \{0, \dots, n\}$;
per the manifold distribution hypothesis we also require (2), and with the specific $d$
we define $\mathcal{R} = \text{Ridge}(p, d)$ and codimension $c = n - d$.
We note that this manifold assumption on density ridge is not very restrictive.
In fact, it is analogous to a modal regression problem
that assumes the conditional modes not to bifurcate.
The remaining part of this section lays out the related mathematical concepts
in differential geometry, measure theory, and dynamical system.

\subsection{Differential geometry}
We call the Euclidean space of dimension $n$ the Euclidean $n$-space;
similarly, if a manifold has dimension $d$, we call it a $d$-manifold.
An \textit{embedded submanifold} $(\mathcal{M}, \mathcal{T}, \mathcal{A})$ of $\mathbb{R}^n$
is a subset $\mathcal{M} \subset \mathbb{R}^n$
endowed with the subspace topology $\mathcal{T}$ and the subspace smooth structure $\mathcal{A}$,
such that the inclusion map $\iota: \mathcal{M} \mapsto \mathbb{R}^n$ is smooth
and its differential has full rank.
A \textit{Riemannian submanifold} $(\mathcal{M}, g)$ of $\mathbb{R}^n$ is an embedded submanifold
$\mathcal{M}$ endowed with the induced Riemannian metric $g = \iota^* \bar g$,
where $\bar g$ is the Euclidean metric (the standard Riemannian metric on $\mathbb{R}^n$)
and $\iota^*$ is the pullback operator by $\iota$.
In the following, $\mathcal{M}$ denotes a Riemannian $d$-submanifold of $\mathbb{R}^n$.
At a point $p \in \mathcal{M}$,
\textit{tangent space} $T_p \mathcal{M}$ is the $d$-dimensional vector space
consisting of all the vectors tangent to $\mathcal{M}$ at $p$,
and \textit{normal space} $N_p \mathcal{M}$
is the $c$-dimensional orthogonal complement to $T_p \mathcal{M}$.
The \textit{normal bundle} $N \mathcal{M}$ is the disjoint union of all the normal spaces:
$N \mathcal{M} = \sqcup_{p\in\mathcal{M}} N_p \mathcal{M}$.
It is often identified with the product manifold $\mathcal{M} \times \mathbb{R}^c$
so that its elements can be written as $(p, v)$,
where $p \in \mathcal{M}$, $v \in N_p \mathcal{M} \cong \mathbb{R}^c$.
The \textit{natural projection} of $N \mathcal{M}$ is the map
$\pi_1: N \mathcal{M} \mapsto \mathcal{M}$ such that $\pi_1(p, v) = p$.

We focus on neighborhoods of $\mathcal{M}$ in $\mathbb{R}^n$
that are diffeomorphic images of open subsets of $N \mathcal{M}$
under by the \textit{addition map} $E(p, v) = p + v$,
so we can identify the two without ambiguity.
For example, a \textit{tubular neighborhood} $B$ is such a neighborhood
that is diffeomorphic to a collection of normal disks of continuous radii:
$B = E(\mathscr{D})$, where $\mathscr{D} = \{(p, v) \in N \mathcal{M} : |v| < \delta(p)\}$
and $\delta \in C^0(\mathcal{M}, \mathbb{R}_+)$.
The existence of tubular neighborhoods is guaranteed by
the tubular neighborhood theorem \cite[Thm 6.24]{Lee2012}.
Note that $E$ is bijective on $\mathscr{D}$,
so its restriction $E|_{\mathscr{D}}: \mathscr{D} \mapsto B$ has an inverse:
$\Phi = (E|_{\mathscr{D}})^{-1}$.
A \textit{retraction} from a topological space onto a subspace
is a surjective continuous map that restricts to the identity map on the codomain.
A \textit{smooth submersion} is a smooth map whose differentials are surjective everywhere.
On a tubular neighborhood, we can define a retraction that is also a smooth submersion as such:
$r = \pi_1 \circ \Phi$, $r: B \mapsto \mathcal{M}$.
It is identical to the projection onto $\mathcal{M}$,
that is, $r = P_{\mathcal{M}}|_B$, where $P_{\mathcal{M}}(x) = \arg\min_{p \in \mathcal{M}} \| p - x \|$.
Thus, we will call $r$ the \textit{canonical projection} of $B$,
and denote it as $\pi$, which should not be confused with $\pi_1$.

Fiber bundle is a way to decompose a manifold into a manifold-indexed collection of
homeomorphic manifolds of a lower dimension.
Besides the normal bundle $N \mathcal{M}$, we have now obtained another fiber bundle $(B, \pi, \Phi)$
over $\mathcal{M}$, where $B$ is the total space, $\pi$ is the canonical projection,
$\Phi$ is the trivialization, and $\mathcal{M}$ is the base space.
The \textit{fiber} $\mathcal{F}_p$ over a point $p \in \mathcal{M}$
is the preimage $\mathcal{F}_p = \pi^{-1}(p)$, $\mathcal{F}_p \subset B$. %
In the case of tubular neighborhoods, the fibers are open disks.
For simplicity, we will denote a fiber bundle by its total space, e.g. denote $(B, \pi, \Phi)$ as $B$.
Since $\mathscr{D} = \Phi(B) \subset N \mathcal{M}$, when there is no ambiguity,
we will call $B$ a normal bundle of $\mathcal{M}$.
The normal bundle $(B, \pi, \Phi)$ decomposes the neighborhood $B$ into
a collection $\{\mathcal{F}_p\}_{p \in \mathcal{M}}$ of fibers indexed by the submanifold,
so that every point in the neighborhood can be written uniquely
as the sum of a point on the submanifold and a normal vector.
In the special case of an $\varepsilon$-tubular neighborhood $B_\varepsilon$,
this is a direct sum decomposition: $B_\varepsilon = \mathcal{M} \oplus \mathcal{F}$,
where model fiber $\mathcal{F}$ is an open disk of radius $\varepsilon$ and dimension $c$.

\subsection{Measure and density}
\label{sub:measure}
Probability measures and probability density functions can also be extended to Riemannian manifolds.
A \textit{measure} $\mu$ is a non-negative function on a sigma-algebra of an underlying set $X$,
which is distributive with countable union of mutually disjoint sets.
A natural choice of sigma-algebra for a topological space $(X, \mathcal{T})$
is its \textit{Borel sigma algebra}, %
the sigma-algebra generated by its topology $\mathcal{T}$; this applies to all manifolds.
A \textit{probability measure} is just a normalized measure: $\mu(X) = 1$.
We use superscript to indicate the underlying set of a measure if it is not $\mathbb{R}^n$.
For example, $\mu^\mathcal{M}$ denotes a probability measure on $\mathcal{M}$.

The \textit{Riemannian density} $d V_g$ on $(\mathcal{M}, g)$
is a density uniquely determined by $g$. %
This \textit{density} is not a probability density function,
but a concept defined for smooth manifolds;
the notation $d V_g$ is intended to resemble a volume element.
If $\mathcal{M}$ is compact,
its \textit{volume} $\text{Vol}(\mathcal{M})$ is the integral of its Riemannian density:
$\text{Vol}(\mathcal{M}) = \int_\mathcal{M} d V_g$; and its \textit{Hausdorff measure} $\mathcal{H}^d$
is the integral of its Riemannian density over measurable sets:
$\mathcal{H}^d(A) = \int_A d V_g$, $A \subset \mathcal{M}$.
We obtain a probability measure on $\mathcal{M}$ by normalizing its Hausdorff measure:
$\mu_0^{\mathcal{M}} = \mathcal{H}^d / \text{Vol}(\mathcal{M})$.
Any function $f \in C^0(\mathcal{M}, \mathbb{R}_{\ge 0})$, $\int_\mathcal{M} f d V_g = 1$,
is a probability density function with respect to $\mu_0^{\mathcal{M}}$, in the sense that
it defines a probability measure $\mu^\mathcal{M} = f \mu_0^\mathcal{M}$.
We denote such a probability density function as $p^{\mathcal{M}}$.
Note that $\mu_0^{\mathcal{M}}$ is used here as a reference probability measure,
which can be considered as the uniform distribution on $\mathcal{M}$;
in fact, it is the uniform distribution in the usual sense
if $\mathcal{M}$ has a positive Lebesgue measure.

On a normal bundle $(B, \pi, \Phi)$ over $\mathcal{M}$, any probability measure $\mu^B$
induces a probability measure $\mu^{\mathcal{M}}_{\perp}$ on $\mathcal{M}$ by marginalization:
$\mu^\mathcal{M}_{\perp}(U) = \mu^B(\pi^{-1}(U))$, $U \subset \mathcal{M}$.
Moreover, if $\mu^B$ can be written as $\mu^B = p \mu_0^B$, %
it induces a probability measure $\mu^{\mathcal{F}}$ on each fiber $\mathcal{F}$ by conditioning:
$\mu^{\mathcal{F}} = p^{\mathcal{F}} \mu_0^{\mathcal{F}}$,
where $p^{\mathcal{F}} = p (\int_{\mathcal{F}} p~d V_g)^{-1} \big{|}_{\mathcal{F}}$.

\subsection{Dynamical system}
\label{sub:dynamical}

A \textit{continuous-time dynamical system}, or a \textit{flow}, $\phi: \mathbb{R} \times X \mapsto X$
is a continuous action of the real group $\mathbb{R}$ on a topological space $X$:
$\forall t, t' \in \mathbb{R}$, $\forall x \in X$,
$\phi(0, x) = x$ and $\phi(t', \phi(t, x)) = \phi(t + t', x)$.
If the action is only on the semi-group $\mathbb{R}_{\ge 0}$, we call it a \textit{semi-flow}.
The \textit{trajectory} $\phi_x$ through a point $x \in X$
is the parameterized curve $\phi_x: \mathbb{R} \to X$, $\phi_x(t) = \phi(t, x)$.
The \textit{time-$t$ map} $\phi^t$, $t \in \mathbb{R}$,
is the map $\phi^t: X \mapsto X$, $\phi^t(x) = \phi(t, x)$.
The \textit{time-$\infty$ map} $\phi^\infty$ is the map $\phi^\infty: S \mapsto S$,
$\phi^\infty(x) = \lim_{t \to \infty} \phi^t(x)$, and $S \subset X$ is where the limit exists.
A \textit{vector field} $v(x)$ on a smooth manifold
is a continuous map that takes each point to a tangent vector at that point.
A \textit{flow generated by a vector field}, if exists,
is a differentiable flow such that $\forall x \in X$, $\frac{\partial \phi}{\partial t}(0, x) = v(x)$.

\begin{proposition}[flow]
  \label{prop:flow}
  Let the \textit{subspace-constrained gradient field} $v: D \mapsto \mathbb{R}^n$, $v(x) = L(x) g(x)$.
  If $p(x)$ has bounded super-level sets $B_c = \{x \in \mathbb{R}^n : p(x) \ge c\}$ for all $c > 0$,
  then $v(x)$ generates a semi-flow $\phi: \mathbb{R}_{\ge 0} \times D \mapsto D$.
  If $p(x)$ has a compact support $\overline{D}$, let $v(x) = 0$, $\forall x \in \partial D$,
  then $v(x)$ generates a flow $\phi: \mathbb{R} \times \overline{D} \mapsto \overline{D}$.
  Moreover, if $v(x)$ is locally Lipschitz or $C^k$, $k \ge 1$,
  then $\phi$ is locally Lipschitz or $C^k$, respectively.
\end{proposition}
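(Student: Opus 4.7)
The plan is to reduce the claim to classical ODE existence/uniqueness on $D$ (respectively $\overline{D}$) and then promote local solutions to a global semi-flow (resp.\ flow) by an energy argument based on the monotonicity of $p$ along trajectories. Under a local Lipschitz (or $C^k$) hypothesis on $v$, Picard--Lindel\"of supplies a unique maximal trajectory $\phi_x$ through every $x \in D$; the only question is whether the forward maximal interval extends to $[0,\infty)$.

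The key observation is that $p$ is non-decreasing along trajectories. Since $L(x)$ is an orthogonal projection, $L^{\mathrm{T}} = L$ and $L^2 = L$, so
\[
\frac{d}{dt}\, p(\phi_x(t)) \;=\; \nabla p(\phi_x(t))^{\mathrm{T}} v(\phi_x(t)) \;=\; g^{\mathrm{T}} L g \;=\; \|L g\|^2 \;=\; \|v(\phi_x(t))\|^2 \;\geq\; 0.
\]
Hence $p(\phi_x(t)) \geq p(x) > 0$ for all $t \geq 0$ in the domain of existence, so the forward trajectory is contained in the super-level set $B_{p(x)} = \{y : p(y) \geq p(x)\}$. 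This set is closed (preimage of a closed ray under the continuous function $p$), bounded by hypothesis, and contained in $D$ (since $p > 0$ on it), hence compact in $D$. A maximal ODE trajectory that stays in a compact subset of its domain exists for all time, so $\phi_x$ extends to $[0,\infty)$. Continuous dependence on initial data plus uniqueness yield the semi-group identity $\phi(t', \phi(t, x)) = \phi(t+t', x)$, giving a semi-flow on $D$.

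For the compact-support case, with $p \in C^2$ up to the boundary and $p = 0$ on $\partial D$, every point of $\partial D$ is a minimum, so $g = 0$ and hence $v = Lg = 0$ there automatically; stipulating $v = 0$ on $\partial D$ just records a continuous (or $C^k$) extension of $v$ to the compact set $\overline{D}$. Points on $\partial D$ give constant trajectories; for interior initial conditions the forward argument applies, and for backward time $p \circ \phi_x$ is non-increasing while the trajectory remains in the compact set $\overline{D}$. Uniqueness forbids an interior trajectory from reaching $\partial D$ in finite time: if $\phi_x(t_0) \in \partial D$ with $t_0 < 0$, then $\phi_x(t_0)$ would be a fixed point, and the constant trajectory through it would coincide with $\phi_x(t_0 + \cdot)$, forcing $x \in \partial D$, a contradiction. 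Compactness of $\overline{D}$ and boundedness of $v$ therefore rule out escape for all $t \in \mathbb{R}$, producing a complete flow. The regularity clause is then the standard theorem on Lipschitz or $C^k$ dependence of ODE solutions on initial conditions. The main subtlety throughout is non-escape — ruling out both finite-time blow-up and exit through $\partial D$ — and this is exactly what the projection identity $g^{\mathrm{T}} L g = \|v\|^2$ delivers via the monotonicity of $p$.
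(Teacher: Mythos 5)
Your proof is correct, and it reaches the paper's conclusion by a somewhat different mechanism. The paper argues on the boundary of each super-level set $B_c$: where $g \ne 0$, the regular level set theorem makes $\partial B_c$ locally a $C^2$ hypersurface with inward normal $g$, and since $\langle Lg, g\rangle = \|Lg\|^2 \ge 0$ the field $v$ points inward or vanishes there; forward invariance of the compact sets $B_c$, followed by the exhaustion $c \to 0$, gives the semi-flow on $D$, and the compact-support case is dispatched by noting that a compactly supported vector field is complete. You instead run a Lyapunov-type computation along trajectories, $\frac{d}{dt}\,p(\phi_x(t)) = g^{\mathrm{T}} L g = \|v\|^2 \ge 0$, which traps the forward orbit in the single compact set $B_{p(x)}$ and lets the standard escape lemma finish; this buys you a cleaner argument that avoids the level-set regularity discussion and the case split on $g = 0$, and your fixed-point/uniqueness argument for why an interior backward orbit cannot reach $\partial D$ in finite time supplies a detail (backward invariance of $\overline{D}$, needed for the claim $\phi : \mathbb{R} \times \overline{D} \mapsto \overline{D}$) that the paper's one-line completeness remark glosses over. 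One caveat: you invoke Picard--Lindel\"of, i.e.\ a local Lipschitz hypothesis on $v$, already for the base claims, whereas the proposition's first two assertions are stated without it and the paper's proof only derives continuity of $v$ from $p \in C^2$; with continuity alone you would have to fall back on Peano existence and lose uniqueness (and hence the clean semigroup property), so strictly your argument establishes the semi-flow and flow statements under the regularity hypothesis of the ``Moreover'' clause --- a looseness the paper's own proof shares, since it too asserts a unique semi-flow from continuity, so this is worth flagging but is not a defect peculiar to your approach.
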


\begin{proof}%
  Because $p \in C^2(\mathbb{R}^n, \mathbb{R}_{\ge 0})$,
  we have $H = \nabla \nabla p \in C^0(\mathbb{R}^n, \mathcal{S}(n))$,
  where $\mathcal{S}(n) = \{A \in \mathbb{R}^{n \times n} : A = A^{\text{T}}\}$.
  So the subspace $\text{Span}(V_c)$ spanned by the eigenvectors of the bottom-$c$ eigenvalues of $H$
  is continuously varying: $\text{Span}(V_c) \in C^0(\mathbb{R}^{n \times n}, G_{c, n})$,
  where the Grassmann manifold $G_{c, n}$ consists of $c$-subspaces of $\mathbb{R}^n$.
  This means the projection matrix $L = V_c V_c^{\text{T}}$ is also continuously varying.
  Since $g = \nabla p \in C^1(\mathbb{R}^n, \mathbb{R}^n)$, we have $v(x) = L(x) g(x)$ is continuous,
  and therefore it is a vector field on $\mathbb{R}^n$.
  Let $\partial B_c$ be the boundary of $B_c$.
  For each $x \in \partial B_c$, if $g(x) \ne 0$,
  by the regular level set theorem \cite[Thm 3.2]{Hirsch1976},
  there is a neighborhood $U(x) \subset \mathbb{R}^n$ such that
  $\partial B_c \cap U(x)$ is a $C^2$ hypersurface in $\mathbb{R}^n$.
  Additionally, $g(x) \in N_x \partial B_c$ points in the inward normal direction.
  Therefore, the projection of $g(x)$ onto any subspace would still points inwards or vanish,
  which applies to $v(x) = L(x) g(x)$.
  If $g(x) = 0$, apparently $v(x) = 0$.
  So for all $x \in \partial B_c$, $v(x)$ points into $B_c$ or vanish.
  Because $B_c$ is a closed set and assumed to be bounded, it is compact.
  Thus, the vector field $v(x)$ is forward complete, that is, it generates a unique semi-flow
  $\phi: \mathbb{R}_{\ge 0} \times B_c \mapsto B_c$.
  As $c \to 0$, $B_c$ expands to $\mathbb{R}^n$, so $v(x)$ generates a semi-flow on $\mathbb{R}^n$.
  If $p(x)$ is compactly supported, then so is $v(x)$,
  therefore $v(x)$ is complete and generates a unique global flow
  $\phi: \mathbb{R} \times \overline{D} \mapsto \overline{D}$.
\end{proof}

\begin{proposition}[convergence]
  \label{prop:convergence}
  If $p(x)$ is analytic and has bounded super-level sets,
  then every forward trajectory converges to a fixed point:
  $\forall x \in \mathbb{R}^n$, $\exists x^* \in v^{-1}(0)$,
  $\lim_{t \to +\infty} \phi_x(t) = x^*$.
\end{proposition}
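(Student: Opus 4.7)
The plan is to combine three standard ingredients: (i) $p$ serves as a Lyapunov function for the subspace-constrained gradient flow, (ii) boundedness of super-level sets plus monotonicity confine each trajectory to a compact set and hence produce a non-empty omega-limit set of fixed points, and (iii) for analytic $p$ a \L{}ojasiewicz-type inequality promotes this to convergence to a single point by forcing the trajectory to have finite arc length.

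First I would verify the Lyapunov property. Because $L$ is an orthogonal projection ($L^2 = L = L^{\text{T}}$), along any trajectory
\begin{equation*}
  \tfrac{d}{dt} p(\phi_x(t)) = \langle g, v\rangle = \langle g, L g\rangle = |L g|^2 \ge 0,
\end{equation*}
so $p \circ \phi_x$ is monotonically non-decreasing. By hypothesis the super-level set $B_c = \{y : p(y) \ge c\}$ with $c = p(x)$ is closed and bounded, hence compact, and by monotonicity is forward-invariant. Therefore the forward orbit is relatively compact and the omega-limit set $\omega(x) \subset B_c$ is non-empty, compact, and connected.

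Second, I would invoke a LaSalle-type argument. The values $p \circ \phi_x$ are monotone and bounded above by $\sup_{B_c} p < \infty$, so they converge to some $p^*$; by continuity every $y \in \omega(x)$ has $p(y) = p^*$. Forward invariance of $\omega(x)$ combined with $\dot p = |L g|^2 \ge 0$ then forces $|L g| \equiv 0$ on $\omega(x)$, i.e., $\omega(x) \subset v^{-1}(0)$.

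Third, to rule out a continuum of accumulation points, I would use a \L{}ojasiewicz/Kurdyka--\L{}ojasiewicz inequality. For analytic $p$, near each $x^* \in \omega(x)$ there exist $\theta \in (0, 1/2]$ and $C > 0$ such that $(p^* - p(y))^{1-\theta} \le C |L(y) g(y)|$ on a neighborhood of $x^*$; granting this,
\begin{equation*}
  -\tfrac{d}{dt}(p^* - p)^\theta
  = \theta(p^* - p)^{\theta-1}|L g|^2
  \ge \theta C^{-1} |L g|,
\end{equation*}
and integrating gives $\int_0^\infty |v|\,dt = \int_0^\infty |L g|\,dt < \infty$, so $\phi_x(t)$ has finite arc length and converges to a unique $x^* \in v^{-1}(0)$. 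The main obstacle is justifying this subspace-constrained \L{}ojasiewicz inequality: the classical statement controls $|g|$, not its projection $|L g|$, and at a fixed point of $v$ one only has $g(x^*) \perp \operatorname{range}(L(x^*))$, so $|g|$ need not vanish. I would address this by exploiting that $p$ analytic makes the Hessian (and generically its eigen-projectors) analytic, so $v = L g$ is subanalytic and the Kurdyka--\L{}ojasiewicz inequality applies to $p$ with desingularizing function adapted to $v$; the technical sub-obstacle is handling eigenvalue crossings of $H$, where $L$ can lose analyticity, most cleanly resolved by restricting to the dense open set where the bottom $c$ eigenvalues separate from the top $d$.
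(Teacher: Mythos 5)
Your steps (i)--(ii) are fine: $p$ is a Lyapunov function with $\dot p = \|Lg\|^2$, bounded super-level sets give a precompact forward orbit, and LaSalle puts $\omega(x)$ inside $v^{-1}(0)$. The gap is in step (iii), which is the whole difficulty. The subspace-constrained Lojasiewicz inequality $(p^*-p(y))^{1-\theta} \le C\,\|L(y)g(y)\|$ cannot hold on any neighborhood of the limit point: the limit point $x^*$ is (generically) a ridge point, the ridge $\{Lg=0\}$ passes through $x^*$, and $p$ is in general non-constant along the ridge, so there are points $y$ arbitrarily close to $x^*$ with $L(y)g(y)=0$ but $p(y)<p^*$, violating the inequality outright. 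Note also that $g(x^*)\ne 0$ in general ($p^*$ is not a critical value of $p$), so this is not a situation the classical gradient inequality, or the Kurdyka--Lojasiewicz inequality ``applied to $p$,'' can be adapted to by choosing a desingularizing function: the failure is structural, not a matter of regularity of $L$. Your proposed fix --- subanalyticity of $v$ plus restricting to the open set where the bottom-$c$ eigenvalues separate --- does not address this, and in any case you need the estimate along the tail of the trajectory near its limit, where you cannot guarantee the eigenvalue gap or avoid the set you excluded.

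The paper takes a different route around exactly this obstruction: it invokes the Lojasiewicz convergence theorems for \emph{gradient-like} vector fields satisfying an \emph{angle condition} with $\nabla p$ (Lageman; Absil et al.). Since $\cos\angle(v,g)=\|v\|/\|g\|$ degenerates to $90^\circ$ near the ridge, the angle condition also fails there, and the paper handles this by truncating: it sets $\tilde v = 0$ on the bad set $U(\delta)$ where the angle is within $\delta$ of $\pi/2$, applies the angle-condition theorem to the flow of $\tilde v$ (using compact super-level sets to rule out escape to infinity), concludes convergence to some $x^*\in U(\delta)$, and then lets $\delta\to 0^+$. If you want to salvage your finite-length strategy, you would need an inequality comparing $\|v\|$ to the decay of $p$ \emph{restricted to the stable fiber} through the trajectory (where the ridge point is a genuine maximizer), or else adopt the angle-condition/truncation machinery; as written, the key inequality you integrate is false.
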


\begin{proof}%
  When $v(x) \ne 0$, we have $v = L g = V_c V_c^{\text{T}} g \ne 0$,
  which means $V_c^{\text{T}} g \ne 0$ and therefore $(v, g) = g V_c V_c^{\text{T}} g > 0$.
  Because $(v, g) = g V_c V_c^{\text{T}} g \ge 0$, we have $(v, g) = 0$ implies $v = 0$.
  Let $\theta(u, w) = (u, w) / (\|u\| \|w\|)$,
  then $\{x : \theta(g, v) = \pi / 2\} \subset v^{-1}(0)$.
  Let $\delta \in [0, \pi / 2]$ and
  $U(\delta) = \{x : \theta(g, v) \ge \pi / 2 - \delta\} \subset v^{-1}(0)$.
  Let $\tilde{v}(x) = 0$ if $x \in U(\delta)$ and $\tilde{v}(x) = v(x)$ otherwise.
  Let $\tilde{\phi}$ be the semi-flow generated by $\tilde{v}$.
  Then $\forall \delta > 0$, $\forall x \in D \setminus U(\delta)$, $\forall t \in \mathbb{R}_{\ge 0}$:
  $\theta(g(\xi), \tilde{v}(\xi)) \le \pi / 2 - \delta$, where $\xi = \tilde{\phi}_x(t)$.
  By Lojasiewicz’s theorem with an angle condition (see \cite{Lageman2002,Absil2005}),
  either $\lim_{t \to +\infty} \|\tilde{\phi}_x(t)\| = \infty$
  or $\exists x^* \in \mathbb{R}^n$, $\lim_{t \to +\infty} \tilde{\phi}_x(t) = x^*$.
  Because $p(\tilde{\phi}_x(t))$ is non-decreasing and $p(x)$ has compact super-level sets,
  $\tilde{\phi}_x(t)$ must converge to a point $x^*$.
  And because $\tilde{v}(x^*) = 0$, we have $x^* \in U(\delta)$.
  Let $x^\dagger = \lim_{\delta \to 0+} x^*$, then $x^\dagger \in U(0) = v^{-1}(0)$.
\end{proof}

Due to the convergence property of $\phi$, we can focus on its fixed points.
For $\phi$, the set of asymptotically stable fixed points is $\mathcal{R}$,
which can be easily checked by definition.
In fact, $\mathcal{R}$ is the attractor of $\phi$, %
and nearby trajectories approach along normal directions \cite[Lemma 8]{Genovese2014}.
The \textit{basin of attraction} $\mathcal{B}$ of $\mathcal{R}$
is the union of images of all trajectories that tend towards it:
$\mathcal{B} = \{x \in \mathbb{R}^n : \phi^\infty(x) \in \mathcal{R}\}$.
By \cite[Lemma 3]{Genovese2014},
$\mathcal{B}$ contains an $\varepsilon$-tubular neighborhood $B_\varepsilon$ of $\mathcal{R}$,
where $\mathcal{R}$ is exponentially attractive.
Here we show that, %
under a stronger manifold assumption, %
$\mathcal{B}$ is a set of probability one.

\begin{proposition}[basin]
  \label{prop:basin}
  If $p(x)$ is analytic and has a compact support $\overline{D}$,
  and $A_u = \{x \in D : v = 0, \lambda_c > 0\}$ and $A_c = \{x \in D : v = 0, \lambda_c = 0\}$
  are, respectively, embedded $d$- and $(d-1)$-submanifolds of $\mathbb{R}^n$,
  then $\mathcal{B}$ is a subset of full Lebesgue measure and therefore has probability one:
  $\lambda(D \setminus \mathcal{B}) = 0$, $\mu(\mathcal{B}) = 1$,
  where $\lambda$ is the Lebesgue measure on $\mathbb{R}^n$.
\end{proposition}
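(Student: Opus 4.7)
My plan is to prove the Lebesgue statement $\lambda(D \setminus \mathcal{B}) = 0$; the probabilistic claim $\mu(\mathcal{B}) = 1$ then follows because $\mu$ has a density on $D$ by Assumption \ref{ass:basic}, so $\mu \ll \lambda$ on $D$. The first step is to partition the complement. By Proposition \ref{prop:convergence}, every forward trajectory in $D$ converges to a fixed point of $v$, and the fixed-point set splits according to the sign of $\lambda_c$ as $v^{-1}(0) = \mathcal{R} \sqcup A_u \sqcup A_c$. Hence
\[ D \setminus \mathcal{B} \;\subseteq\; W^s(A_u) \cup W^s(A_c), \qquad W^s(A) := \{x \in D : \phi^\infty(x) \in A\}, \]
and it suffices to prove each stable set is Lebesgue-null.

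The crucial linearization input is that a direct calculation, expanding $Dv = (DL) g + L H$ at a fixed point $x_0$ and using $g(x_0) \in \mathrm{Span}(V_d(x_0))$, shows that the normal block of $Dv(x_0)$ on $\mathrm{Span}(V_c(x_0))$ has spectrum $\{\lambda_1(x_0), \dots, \lambda_c(x_0)\}$. Consequently, at every $x_0 \in A_u$ the Jacobian carries at least one positive real eigenvalue (namely $\lambda_c > 0$) in a direction transverse to $A_u$, and at every $x_0 \in A_c$ it has a zero eigenvalue there.

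The second step applies invariant-manifold theory. Because $A_u$ is a compact embedded $d$-submanifold with a nontrivial unstable normal bundle, $A_u$ is normally hyperbolic and the Hirsch--Pugh--Shub stable-manifold theorem yields, near each point, a local stable manifold of dimension $\leq n - 1$. Covering $A_u$ with countably many such neighborhoods and pulling their local stable manifolds back under the time-$t$ maps of $\phi$ presents $W^s(A_u)$ as a countable union of Lipschitz submanifolds of codimension $\geq 1$, so $\lambda(W^s(A_u)) = 0$. For $A_c$ the center-stable manifold theorem produces local center-stable manifolds of dimension at most $\dim(A_c) + c = (d - 1) + c = n - 1$, and the same covering argument forces $\lambda(W^s(A_c)) = 0$. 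Adding the two bounds yields $\lambda(D \setminus \mathcal{B}) = 0$.

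The main obstacle I anticipate is regularity: the field $v = Lg$ need not be $C^1$ where the spectral gap $\lambda_c < \lambda_{c+1}$ collapses, whereas the stable- and center-manifold theorems require $C^1$ (or higher). I plan to handle this by invoking analyticity of $p$: the degenerate locus is itself lower-dimensional, so around each point of $A_u$ or $A_c$ one can restrict to the open set where the bottom-$c$ eigenvalues are simple and $L$ is real-analytic, and apply the classical theorems there. A secondary, bookkeeping obstacle is the global-to-local passage via backward iteration of the semi-flow, which $\sigma$-compactness of $\overline{D}$ keeps countable.
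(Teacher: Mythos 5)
Your overall architecture matches the paper's proof: split $v^{-1}(0) = \mathcal{R} \sqcup A_u \sqcup A_c$, reduce $D \setminus \mathcal{B}$ to the stable sets of $A_u$ and $A_c$, and kill each by a dimension count (the paper's own proof is a terse version of exactly this). However, your treatment of $A_c$ has a genuine gap. The center-stable manifold theorem bounds the local center-stable manifold's dimension by $n$ minus the number of eigenvalues of $Dv(x_0)$ with \emph{positive} real part, and at a point of $A_c$ there need not be any such eigenvalue: in the normal block the spectrum is $\lambda_1 \le \dots \le \lambda_c = 0$ (no expansion guaranteed), and the tangential directions are neutral for $v = Lg$ (note $LHV_d = 0$), so the local center-stable manifold can have dimension $n$, not your claimed $(d-1)+c$. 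Degenerate (purely center) fixed points can attract sets of positive measure --- e.g.\ $\dot{x} = -x^3$ on $\mathbb{R}$ attracts everything --- so the measure-zero conclusion for $W^s(A_c)$ cannot come from the center-stable theorem alone. The paper's argument at this step is different: it asserts that the stable set $W(x)$ of \emph{each individual} fixed point has dimension at most $c$ (trajectories approach along normal directions), and then adds $\dim A_c = d-1$ to get $n-1 < n$; your count reproduces this arithmetic but attributes it to a theorem that does not deliver it, so as written the $A_c$ step fails.

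Two secondary caveats on the $A_u$ side. First, your claim that the normal block of $Dv(x_0)$ has spectrum exactly $\{\lambda_1,\dots,\lambda_c\}$ drops the $(DL)g$ term; $g(x_0) \in \mathrm{Span}(V_d)$ does not make that term vanish, and the near-linearity results of Genovese et al.\ control it only near $\mathcal{R}$ under (A2), not at $A_u$ or $A_c$. So the existence of an unstable eigenvalue at points of $A_u$ needs an argument (the paper simply asserts the unstable manifold's existence). Second, Hirsch--Pugh--Shub normal hyperbolicity requires compactness of $A_u$ and domination of normal rates over tangential ones, neither of which is given; for the measure-zero conclusion you do not need it --- a pointwise local center-stable manifold of codimension $\ge 1$ at each $x_0 \in A_u$ (granted the unstable eigenvalue), together with your countable-cover and backward-iteration argument, suffices. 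Your attention to the regularity of $v$ where eigenvalue gaps collapse, and the reduction $\mu \ll \lambda$ for the probabilistic claim, are both fine and are points the paper glosses over.
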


\begin{proof}%
  $v^{-1}(0)  = \mathcal{R} \sqcup A_u \sqcup A_c$,
  and $\forall x \in v^{-1}(0)$, $W(x) = \{\xi \in D : \lim_{t \to +\infty}\phi_\xi(t) = x\}$
  has dimension at most $c$.
  For $x \in A_u$, $x$ has an unstable manifold of dimension at least one,
  so $D_u = \{x \in D : \lim_{t \to +\infty}\phi_x(t) \in A_u\}$ has Lebesgue measure zero.
  Because $A_c$ has dimension $d - 1$, and $d - 1 + c = n - 1 < n$,
  so $D_c = \{x \in D : \lim_{t \to +\infty}\phi_x(t) \in A_c\}$ also has Lebesgue measure zero.
  Because $D = \mathcal{B} \sqcup D_u \sqcup D_c$,
  so $\lambda(D \setminus \mathcal{B}) = \lambda(D_u \sqcup D_c) = \lambda(D_u) + \lambda(D_c) = 0$.
  Because $\mu(D) = 1$ and $\mu(D \setminus \mathcal{B}) \le \max(p)
  \lambda(D \setminus \mathcal{B}) = 0$,
  so $\mu(\mathcal{B}) \ge \mu(D) - \mu(D \setminus \mathcal{B}) = 1$
  and therefore $\mu(\mathcal{B}) = 1$.
\end{proof}

Now we have (yet another) fiber bundle $(\mathcal{B}, \pi, \Phi)$ over $\mathcal{R}$,
where canonical projection $\pi(x) = \phi^\infty(x)$
and trivialization $\Phi(x) = (\pi(x), x - \pi(x))$.
$\pi$ is a retraction that approximates the projection
$P_{\mathcal{R}}(x) = \arg\min_{p \in \mathcal{R}} \| p - x \|$
to the second order \cite[Lem 2.8]{ZhangRD2020nr}.
If $\phi$ is smooth within $\mathcal{B}$, then $\pi$ is a smooth submersion.
Because the fiber $\mathcal{F}_r = \pi^{-1}(r)$ over each point $r \in \mathcal{R}$
is a level set of $\pi$, by the submersion level set theorem \cite[Cor 5.13]{Lee2012},
it is a properly embedded $c$-submanifold.
In terms of the dynamical system, each $\mathcal{F}_r$ is a stable manifold,
because every forward trajectory starting on $\mathcal{F}_r$ stays within $\mathcal{F}_r$
and converges to $r$.
Because $\mathcal{F}_r$ intersects $\mathcal{R}$ at $r$ orthogonally,
we will also call $(\mathcal{B}, \pi, \Phi)$ a normal bundle over $\mathcal{R}$,
when there is no ambiguity.
See \Cref{fig:overview}d for commutative diagrams of this bundle
and its restriction to a subset of the ridge.
As with the general case discussed earlier,
any probability measure $\mu^{\mathcal{B}}$ on $\mathcal{B}$
induces a probability measure $\mu^{\mathcal{R}}_{\perp}$ on $\mathcal{R}$ by marginalization:
$\mu^{\mathcal{R}}_{\perp} = \mu^{\mathcal{B}} \circ \pi^{-1}$.
But the dynamical system offers a more explicit perspective on this marginalization process:
$\phi$ continuously transforms $\mu^{\mathcal{B}}$ towards $\mathcal{R}$ such that
at $t > 0$, $\mu_t^{\mathcal{B}} = \mu^{\mathcal{B}} \circ (\phi^t)^{-1}$,
and the induced measure is the asymptotic measure,
$\mu^{\mathcal{R}}_{\perp} = \lim_{t \to \infty} \mu_t^{\mathcal{B}}$.

\section{Algorithm}
\label{sec:alg}

We formally describe normal-bundle bootstrap in \cref{alg:nbb}, and analyze its properties.
Let $K_h$ be a density kernel with bandwidth $h$,
density estimate $\hat{p}_{h}(x) = N^{-1} \sum_{i=1}^N K_h(x - x_i)$, and
leave-one-out density estimate $\hat{p}_{h, -i}(x) = (N-1)^{-1} \sum_{j \ne i} K_h(x - x_j)$.
Let $\alpha \in (1, +\infty)$ be an oversmoothing factor and
$k \in \{0, \dots, N\}$ be the number of nearest neighbors.
A standing assumption of the algorithm is that $d$ is small,
so that $N$ does not have to be too large for good estimation.
On the other hand, $n$ can be reasonably large under typical computational constraints.

\alglanguage{pseudocode}
\begin{algorithm}[h]
  \caption{NormalBundleBootstrap$(X, d, \alpha, k)$}
  \label{alg:nbb}
  \begin{algorithmic}[1] %
    \State $h \gets \alpha \arg\max_h \sum_{i=1}^N \log \hat{p}_{h, -i}(x_i)$
    \Comment{kernel bandwidth selection} %
    \State $(\hat{r}_i, V_{c,i}) \gets \text{SCMS}(x_i; \log \hat{p}_{h}, d)$, for $i \in N$
    \Comment{ridge estimation} %
    \State $E \gets \text{SmoothFrame}(\hat{r}, V_c, n - d)$
    \Comment{align bottom-$c$ eigenvectors} \label{line:frame}
    \State $[\hat{n}]_i \gets E_i^{\text{T}} (x_i - \hat{r}_i)$, for $i \in N$
    \Comment{coordinates of normal vectors} \label{line:coordinate} %
    \State $K \gets \text{KNN}(\hat{r}, k)$
    \Comment{$k$-nearest neighbors on ridge}
    \State $\tilde{x}_{ij} \gets \hat{r}_i + E_i [\hat{n}]_{K(i, j)}$, for $i \in N, j \in k$
    \Comment{construct new data} \label{line:construct}
  \end{algorithmic}
\end{algorithm}

In this algorithm, smooth frame construction (line \ref{line:frame}) and
coordinate representation (line \ref{line:coordinate}) can be removed to save computation,
but with less desirable results.
In this case data construction (line \ref{line:construct}) directly uses
projection vectors $x_l - \hat{r}_l$ or normal vectors $\hat{n}_{il} = L_i (x_l - \hat{r}_l)$,
where $l = K(i,j)$.
Algorithms \texttt{SCMS} and \texttt{SmoothFrame} are given in supplementary materials.

\subsection{Qualitative properties of the dynamical system}
\label{sub:qualitative}

In \cref{sub:dynamical} we have shown that, under suitable conditions,
subspace-constrained gradient field $v$ generates a flow $\phi$ whose attractor is $\mathcal{R}$,
and the basin of attraction $\mathcal{B}$ is a fiber bundle
with canonical projection $\pi = \phi^\infty$.
The dynamical system $\phi$ is determined by $p$.
Because the dynamical system is stable \cite[Thm 4]{Genovese2014},
$p$ can be replaced by an estimate $\hat{p}$
to obtain an attractor $\hat{\mathcal{R}} = \text{Ridge}(\hat{p}, d)$ that approximates $\mathcal{R}$.
Here we use a density estimate $\hat{p}_h$ with Gaussian kernel $K_h(x) \propto \exp(-x^2/(2h^2))$.
Denote the generated flow as $\phi_N$ and
the estimated ridge as $\hat{\mathcal{R}}_N = \text{Ridge}(\hat{p}_h, d)$.

It is preferable to define $\phi_N$ by $\log \hat{p}_h$ instead of $\hat{p}_h$.
Note that $\text{Ridge}(\log \hat{p}_h, d) = \hat{\mathcal{R}}_N$.
If $\phi_N$ is defined by $\log \hat{p}_h$, then $\mathcal{B}$ is larger %
and independent of the size of normal space distribution \cite[Thm 7]{Genovese2014},
and trajectories are more orthogonal to $\hat{\mathcal{R}}_N$ (see \Cref{fig:dynamical}).
Moreover, $\hat{\mathcal{R}}_N$ is exponentially stable within $\mathcal{B}$,
as $v$ is approximately linear in normal spaces %
\cite[Lemma 8]{Genovese2014}.

The attractor $\hat{\mathcal{R}}_N$ may be bounded or unbounded.
If $\mathcal{R}$ is a compact submanifold without boundary, as is often assumed in previous studies,
$\hat{\mathcal{R}}_N$ can be compact and without boundary.
If $\mathcal{R}$ has a boundary, $\hat{\mathcal{R}}_N$ would be unbounded, see \Cref{fig:dynamical}.
This is also true if $\mathcal{R}$ is noncompact, as is the Gaussian example in \Cref{fig:overview}a.
In such cases, although finite data is always bounded, the attractor will be unbounded.

\begin{figure}[t]
  \centering
  \includegraphics[width=\linewidth]{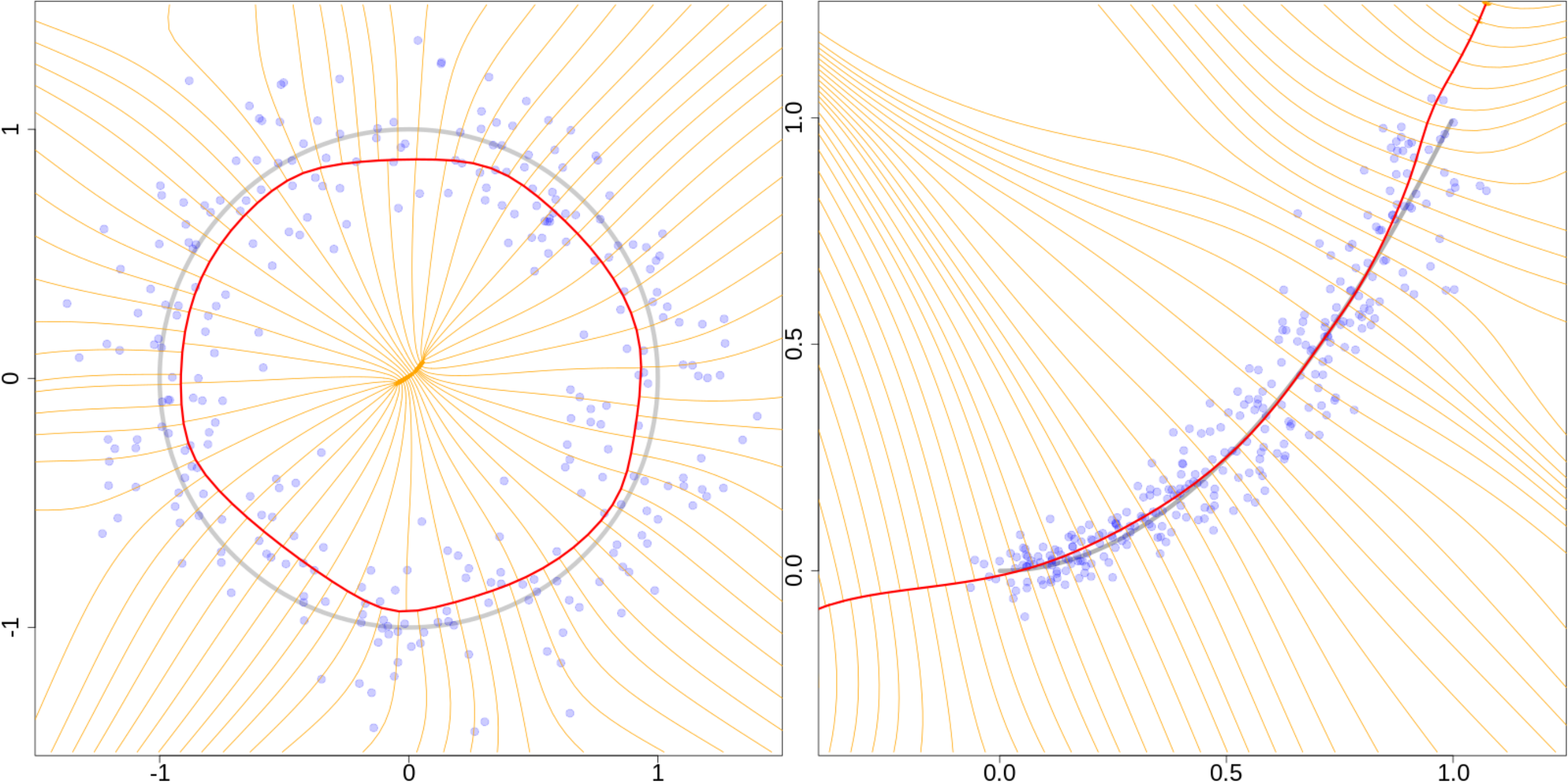}
  \caption{Subspace-constrained gradient flow as projection to estimated density ridge.
    Data (blue points); true (gray curve) and estimated (red curve) density ridge;
    trajectories (orange curves), pointing towards estimated ridge.
    (a) True ridge is the unit circle, a manifold without boundary;
    the estimated ridge is also without boundary.
    (b) True ridge is a parabola segment, a manifold with boundary;
    the estimated ridge is unbounded.}
  \label{fig:dynamical}
\end{figure}

\subsection{Statistical properties}
\label{sub:statistical}

As in \cref{sub:measure,sub:dynamical},
the normal bundle $(\mathcal{B}, \pi, \Phi)$ over the density ridge $\mathcal{R}$
decomposes the original probability measure $\mu$
into a ``marginalized measure'' $\mu^{\mathcal{R}}_{\perp}$ on the ridge and
a ``conditional measure'' $\mu^{\mathcal{F}_r}$ on each fiber, where $r \in \mathcal{R}$.
If we know $\mu^{\mathcal{R}}_{\perp}$ and each $\mu^{\mathcal{F}_r}$, we can sample $\mu$ as follows:
first sample $\mathbf{r} \sim \mu^{\mathcal{R}}_{\perp}$, and then sample $\mu^{\mathcal{F}_{\mathbf{r}}}$.
Although such measures are unknown, we can still estimate them from available data,
and use them for inference and data augmentation.
Here we show that normal-bundle bootstrap constructs new data points
that are consistent with the conditional measures on the normal spaces,
and have nice finite-sample validity.

\begin{assumption}[\cite{Genovese2014}, Sec 2.2]
  \label{ass:advanced}
  In a neighborhood $B$ of ridge $\mathcal{R}$, %
  (A0) $p(x)$ is three times differentiable;
  (A1) $p(x)$ is sharply curved in normal spaces:
  $\lambda_c < - \beta$ and $\lambda_c < \lambda_{c+1} - \beta$, where $\beta > 0$; %
  (A2) trajectories $\phi_x(t)$ are not too wiggly and
  tangential gradients $U(x) g(x)$ are not too large:
  $\|U(x)g(x)\| \max_{i,j,k}\left|\frac{\partial H_{ij}}{\partial x_k}(x)\right| <
  \frac{\beta^2}{2n^{3/2}}$.
\end{assumption}

\begin{theorem}[consistency]
  \label{thm:consistency}
  Let \cref{ass:advanced} hold for the measure $\mu$ in the basin of attraction $\mathcal{B}$,
  and the conditional measure $\mu^{\mathcal{F}_r}$ varies slowly over the ridge $\mathcal{R}$,
  then for each estimated ridge point
  $\hat{\mathbf{r}} = \pi_N(\mathbf{x}) = \phi_N^\infty(\mathbf{x})$,
  as sample size $N \to \infty$,
  the distributions of the constructed data points $\tilde{\mathbf{x}}_j$, $j \le k$,
  converge to the distribution restricted to the fiber of the estimated ridge point:
  $\tilde{\mathbf{x}}_j | \hat{\mathbf{r}} \xrightarrow{d} \mathbf{x}|_{\mathcal{F}_{\hat{r}}}$.
\end{theorem}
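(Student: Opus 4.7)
The plan is to decompose the argument into three components: (i) consistency of the estimated ridge geometry (the projection $\pi_N$, the ridge $\hat{\mathcal{R}}_N$, and the smooth frame $E$), (ii) convergence of the $j$-th nearest neighbor on the estimated ridge to the query point, and (iii) weak convergence of the conditional distribution of normal coordinates under the slow-variation hypothesis on $\mu^{\mathcal{F}_r}$.

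First, I would invoke the stability results for the subspace-constrained gradient flow referenced in Section 3.1. Under \cref{ass:advanced} with a Gaussian kernel estimator $\hat{p}_h$, standard kernel density theory gives uniform convergence of $\hat{p}_h$, $\nabla \hat{p}_h$, and $\nabla^2 \hat{p}_h$ to their population counterparts on compact subsets. Via the cited results of Genovese et al.\ (Thm 4 and Lemma 8), this yields Hausdorff convergence $\hat{\mathcal{R}}_N \to \mathcal{R}$, uniform convergence $\pi_N \to \pi$, and convergence of the normal eigenspace field $V_{c,i}$ to a continuous normal frame. The spectral gap (A1) ensures these eigenspaces vary smoothly, so after alignment by \texttt{SmoothFrame} the frame $E_i$ approximates a smooth orthonormal frame $E(\cdot)$ on $\mathcal{R}$ in a neighborhood of $r := \lim_N \hat{r}_i$.

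Next, fix $i$ and $j \le k$. Because the marginalized ridge measure $\mu^{\mathcal{R}}_{\perp}$ has positive density in every neighborhood of $r$, a standard $k$-nearest-neighbor argument gives $\|\hat{r}_{K(i,j)} - \hat{r}_i\| \to 0$ in probability, and jointly with Step 1 the frames satisfy $E_{K(i,j)} \to E_i$. The coordinate $[\hat{n}]_{K(i,j)} = E_{K(i,j)}^{\text{T}}(x_{K(i,j)} - \hat{r}_{K(i,j)})$ is, conditional on $\hat{r}_{K(i,j)} = s$, distributed as the pushforward of the conditional measure $\mu^{\mathcal{F}_s}$ under the linear isometry $E_s^{\text{T}} : N_s\mathcal{R} \to \mathbb{R}^c$. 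By the slow-variation hypothesis on $r \mapsto \mu^{\mathcal{F}_r}$ together with continuity of $E$, as $\hat{r}_{K(i,j)} \to \hat{r}_i$ this distribution converges weakly to the pushforward of $\mu^{\mathcal{F}_{\hat{r}_i}}$ under $E_{\hat{r}_i}^{\text{T}}$.

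To conclude, the constructed point $\tilde{x}_{ij} = \hat{r}_i + E_i [\hat{n}]_{K(i,j)}$ is the image of $[\hat{n}]_{K(i,j)}$ under the affine isometry $u \mapsto \hat{r}_i + E_i u$, the inverse of $E_{\hat{r}_i}^{\text{T}}$ composed with translation. The continuous mapping theorem then gives $\tilde{x}_{ij}\,|\,\hat{r}_i \xrightarrow{d} \mu^{\mathcal{F}_{\hat{r}}}$. The identification of the affine normal space at $\hat{r}_i$ with the curved fiber $\mathcal{F}_{\hat{r}_i}$ is justified, within the tubular neighborhood where normal vectors are small, by the second-order approximation of $\pi$ to $P_\mathcal{R}$ cited from ZhangRD2020nr in Section 2.3. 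The main obstacle will be making the slow-variation hypothesis quantitative enough — for instance, Lipschitz continuity of $r \mapsto \mu^{\mathcal{F}_r}$ in Wasserstein distance — to control the smoothing bias incurred by using neighbors in place of samples exactly at $\hat{r}_i$, together with a rate such as $\|\hat{r}_{K(i,j)} - \hat{r}_i\| = O_p((k/N)^{1/d})$. A secondary technical challenge is coupling convergence of the frame $E$ with convergence of the ridge points so that Steps 1--3 hold jointly rather than marginally; this is where the spectral gap in (A1) is essential to prevent eigenspace flipping during alignment.
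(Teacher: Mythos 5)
Your proposal follows essentially the same route as the paper's own proof: decompose $\mu$ via the normal bundle over the estimated ridge, invoke the Genovese et al.\ ridge-consistency results (the paper cites their Thm~5 for Hausdorff convergence), use the slow-variation hypothesis to transfer conditional fiber measures from the true to the estimated ridge and between neighboring fibers, note that $k$-nearest-neighbor distances vanish as $N \to \infty$, and conclude by adding the neighboring normal vectors back to the ridge point. Your treatment is in fact somewhat more careful than the paper's --- explicitly tracking the frame coordinates $E_i^{\text{T}}$, the affine-space-versus-curved-fiber identification via the second-order approximation of $\pi$, and the need for a quantitative (e.g.\ Wasserstein--Lipschitz) version of slow variation --- but these are refinements of, not departures from, the paper's argument.
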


\begin{proof}
The normal bundle $(\mathcal{B}, \pi_N)$ over the estimated ridge $\hat{\mathcal{R}}_N$
decomposes the original measure $\mu$ into the marginalized measure
$\mu^{\hat{\mathcal{R}}_N}_{\perp}$ and the conditional measures $\mu^{\mathcal{F}_{\hat{r}}}$,
$\hat{r} \in \hat{\mathcal{R}}_N$.
Because the data is distributed as the original measure, $\mathbf{x} \sim \mu$,
each estimated ridge point is then distributed as the marginal measure,
and the normal vector at each estimated ridge point is distributed as
the conditional measure at that ridge point:
$\hat{\mathbf{r}} \sim \mu^{\hat{\mathcal{R}}_N}_{\perp}$ and
$\mathbf{n} | \hat{\mathbf{r}} \sim \mu^{\mathcal{F}_{\hat{r}}}$.

Since the Gaussian kernel is smooth, the density estimate $\hat{p}_h(x)$ satisfies condition (A0).
By \cite[Thm 5]{Genovese2014}, as sample size $N$ goes to infinity,
the estimated ridge $\hat{\mathcal{R}}_N$ within the basin of attraction $\mathcal{B}$
converges to the true ridge: $\lim_{N \to \infty} \text{Haus}(\mathcal{R}, \hat{\mathcal{R}}_N) = 0$,
where the Hausdorff distance between two sets is defined as
$\text{Haus}(A, B) = \max\{\sup_{x \in A} d(x, B), \sup_{x \in B} d(x, A)\}$.
Because the conditional measures $\mu^{\mathcal{F}_r}$ over the true ridge $\mathcal{R}$ vary slowly,
and the estimated ridge approximates the true ridge,
the conditional measures $\mu^{\mathcal{F}_{\hat{r}}}$ over the estimated ridge $\hat{\mathcal{R}}_N$
also vary slowly.
For an estimated ridge point $\hat{\mathbf{r}}$, the normal vectors at its $k$-nearest neighbors
$\hat{\mathbf{r}}_j$, $j \le k$, are thus distributed similarly to the normal vector at this point:
$\mathbf{n}_j | \hat{\mathbf{r}}_j \sim \mu^{\mathcal{F}_{\hat{r}_j}}$,
$\mu^{\mathcal{F}_{\hat{r}_j}} \approx \mu^{\mathcal{F}_{\hat{r}}}$.
As sample size $N$ goes to infinity, the distances to its $k$-nearest neighbors vanishes:
$\lim_{N \to \infty} d(\hat{\mathbf{r}}, \hat{\mathbf{r}}_j) = 0$.
Therefore, the distributions of neighboring normal vectors converge to the distribution
of the normal vector at the estimated ridge point:
$\lim_{N \to \infty} \mu^{\mathcal{F}_{\hat{r}_j}} = \mu^{\mathcal{F}_{\hat{r}}}$.
Note that this limit is understood in the sense of a metric on measure spaces,
such as the Wasserstein metrics.
The constructed data points add neighboring normal vectors to the estimated ridge point,
$\tilde{\mathbf{x}}_j | \hat{\mathbf{r}} = \hat{\mathbf{r}} + \mathbf{n}_j$;
as a result, their distributions converge to the original measure
restricted to the fiber of the estimated ridge point:
$\tilde{\mathbf{x}}_j | \hat{\mathbf{r}} \xrightarrow{d}
\hat{\mathbf{r}} + \mathbf{n} | \hat{\mathbf{r}} \sim \mathbf{x}|_{\mathcal{F}_{\hat{r}}}$.
\end{proof}

We have shown that the normal-bundle bootstrapped data
have desirable large-sample asymptotic behavior,
but their finite-sample behavior is also very good.
In fact, as soon as the estimated ridge becomes close enough to the true ridge
such that the conditional measures $\mu^{\mathcal{F}_{\hat{r}}}$ over the estimated ridge vary slowly,
the conditional measures on neighboring fibers become similar to each other:
$\mu^{\mathcal{F}_{\hat{r}_j}} \approx \mu^{\mathcal{F}_{\hat{r}}}$.
This would suffice to make the constructed data distribute similarly
to the original measure restricted to a fiber:
$\tilde{\mathbf{x}}_j | \hat{\mathbf{r}}~\dot\sim~\mathbf{x}|_{\mathcal{F}_{\hat{r}}}$.
Even if the estimated ridge has a finite bias to the true ridge, see e.g. \Cref{fig:dynamical}a,
it would not affect the conclusion.
Suppose the true ridge is the unit circle and the conditional measures $\mu^{\mathcal{F}_r}$
over the true ridge are identical, if the estimated ridge is a circle of a smaller radius,
then the conditional measures $\mu^{\mathcal{F}_{\hat{r}}}$ over the estimated ridge are also identical,
but with a constant bias to $\mu^{\mathcal{F}_r}$.
Despite such a bias, the constructed data will have the same distribution as the restricted measure:
$\tilde{\mathbf{x}}_j | \hat{\mathbf{r}} \sim \mathbf{x}|_{\mathcal{F}_{\hat{r}}}$.
We will illustrate the finite-sample advantage of NBB in \cref{sec:experiments}.

\subsection{Computational properties}
\label{sub:computational}

SCMS \cite{Ozertem2011} is an iterative algorithm that updates point locations by
$x_{t+1} = x_t + s(x_t)$, where $s(x) = L(x) m(x)$ is the subspace-constrained mean-shift vector
and $m(x)$ is the mean-shift vector.
If density estimate $\hat{p}_h$ uses a Gaussian kernel with bandwidth $h$,
then $m(x) = h^2 \hat{g}_h(x) / \hat{p}_h(x)$,
where $\hat{g}_h(x) = \nabla \hat{p}_h(x)$ is the plug-in estimate of density gradient.
A naive implementation of SCMS would have a computational complexity of $O(N^2 n^3)$ per iteration,
where the $O(N^2)$ part comes from computing for each update point $x_t$ using all data points,
and the $O(n^3)$ part comes from eigen-decomposition of the Hessian.
Although estimates of density, gradient, and Hessian all need to be computed for each update point,
the most costly operation is the eigen-decomposition.

However, a better implementation can reduce the computational complexity to
$O(k d n^2)$ per iteration for one update point.
Here we use the $k$-nearest data points,
assuming that the more distant points have negligible contribution to the estimated terms.
And we use partial eigen-decomposition to obtain the top $d$ eigen-pairs in $O(d n^2)$ time.

Another direction to accelerate computation is by reducing the number of iterations.
Recall that the attractor $\hat{\mathcal{R}}_N$ is exponentially stable,
therefore $\{x_t\}_{t \in \mathbb{N}}$ is linearly convergent.
We can use Newton's method for root finding to achieve quadratic convergence.
For $x$ in a neighborhood $B$ of $\hat{\mathcal{R}}_N$,
let subspace $S = \text{Span}(V_c)$, affine space $A = x + S$,
and let $C$ be the component of $A \cap B$ containing $x$.
Then ridge point $r = C \cap \hat{\mathcal{R}}_N$ is the unique zero of $v|_C$ and it is regular.
Recall that $v = L g$, $L = V_c V_c^{\text{T}}$, Newton's method for $v|_C = 0$ updates by
$x_{t+1} = x_t + L_t \delta_t$, where $\delta_t$ solves
$L_0 H_t L_0 \delta_t = - L_0 g_t$ or $L_t H_t L_t \delta_t = - L_t g_t$.
Both converge quadratically near $\hat{\mathcal{R}}_N$,
while the former only requires (partial) eigen-decomposition at the first step,
and the latter has a larger convergence region \cite[Lem 2.12]{ZhangRD2020nr}.

\section{Experiments}
\label{sec:experiments}

In this section we showcase the application of normal-bundle bootstrap
in inference and data augmentation, using two simple examples.

\subsection{Inference: confidence set of density ridge}
\label{sub:inference}

Normal-bundle bootstrap constructs new data points
that approximate the distributions on normal spaces of the estimated density ridge,
and thus can be used for inference of population parameters of these distributions.
For example, it can provide confidence sets of
the true density ridge via repeated mode estimation in each normal space,
and provide confidence sets of principal manifolds \cite{Hastie1989}
via repeated mean estimation in each normal space.

For a confidence set $\hat{C}_N$ of $\mathcal{R}$,
it is asymptotically valid as a uniform confidence set at level $1 - \alpha$ if
$\liminf_{N \to \infty} P(\mathcal{R} \subset \hat{C}_N) \ge 1 - \alpha$;
similarly, it is valid as a pointwise confidence set if
$\liminf_{N \to \infty} \mathbb{E}\mu^\mathcal{R}_0(\mathcal{R} \subset \hat{C}_N) \ge 1 - \alpha$.
Pointwise confidence sets are less conservative and can be more useful.
We define an NBB pointwise confidence set $\hat{C}^{\text{NBB}}_N = \hat{\mathcal{R}}_N \oplus D_\alpha =
\{\hat{r} + \hat{n} : \hat{r} \in \hat{\mathcal{R}}_N, \hat{n} \in D_\alpha(\hat{r})\}$,
where disk $D_\alpha(\hat{r}) = \hat{m} \oplus \varepsilon_\alpha =
\{\hat{n} \in N_{\hat{r}} \hat{\mathcal{R}}_N : d(\hat{n}, \hat{m}) < \varepsilon_\alpha \}$.
For $\hat{r}_i$, $\hat{m}_i$ is the mode estimated from the constructed points $\tilde{x}_{ij}$.
Radius $\varepsilon_{\alpha}$ is determined by $P(d(m, \hat{m}) < \varepsilon_{\alpha}) = 1 - \alpha$,
where $m$ is the mode of $p|_{\mathcal{F}_{\hat{r}}}$ and corresponds to
$\mathcal{R} \cap \mathcal{F}_{\hat{r}}$; %
its estimator $\hat{\varepsilon}_{\alpha}$
is the $\alpha$-upper quantile of $\{d(\hat{m}^*_b, \hat{m})\}_{b = 1}^B$,
where $\hat{m}^*$ denotes a bootstrap estimate using a bootstrap resample of the constructed points.
Note that an NBB pointwise confidence set for a principal manifold can be defined
simply by replacing $m$ and $\hat{m}$ with mean and sample mean.

Alternatively, confidence sets for $\mathcal{R}$ can also be obtained by bootstrap.
\cite{ChenYC2015} showed that a bootstrap uniform confidence set $\hat{C}^{\text{B}}_N$
converges in Hausdorff distance at a rate of $O(N^{-1/2})$
to the smoothed density ridge $\mathcal{R}_h = \text{Ridge}(p_h, d)$,
where smoothed density $p_h = p * K_h$ and $*$ denotes convolution.
Here, $\hat{C}^{\text{B}}_N = \hat{\mathcal{R}}_h \oplus \varepsilon_\alpha =
\{x \in \mathbb{R}^n : d(x, \hat{\mathcal{R}}_h) < \varepsilon_\alpha \}$
is the $\varepsilon_\alpha$-uniform tubular neighborhood of $\hat{\mathcal{R}}_h$,
the estimated ridge using kernel bandwidth $h$.
Radius $\varepsilon_\alpha$ is determined by
$P(d_\Pi (\hat{\mathcal{R}}_h, \mathcal{R}_h) < \varepsilon_\alpha) = 1 - \alpha$, where
$d_\Pi (\hat{\mathcal{R}}_h, \mathcal{R}_h) = \sup_{x \in \mathcal{R}_h} d(x, \hat{\mathcal{R}}_h)$;
its estimator $\hat{\varepsilon}_\alpha$ is the $\alpha$-upper quantile of
$\{d_\Pi(\hat{\mathcal{R}}^*_b, \hat{\mathcal{R}}_h)\}_{b = 1}^B$.
A bootstrap pointwise confidence set of $\mathcal{R}_h$ can be similarly defined
where $\varepsilon_\alpha$ is determined by
$P(d(\mathbf{r}, \hat{\mathcal{R}}_h) < \varepsilon_\alpha) = 1 - \alpha$
and estimator $\hat{\varepsilon}_\alpha$ is the $\alpha$-upper quantile of
$\{d(\hat{r}^*_{i,b}, \hat{\mathcal{R}}_h)\}^{b = 1 \dots B}_{i = 1 \dots N}$.
But if $N$ is small and therefore $h$ is large,
$\mathcal{R}_h$ can have large bias from $\mathcal{R}$,
so the bootstrap confidence sets can have poor coverage of $\mathcal{R}$.

Here we compare the pointwise confidence sets of density ridge by NBB and bootstrap.
As an experiment, data are sampled uniformly on the unit circle,
and a Gaussian noise is added in the radial direction:
$\mathbf{x} = \mathbf{r} e^{i \boldsymbol{\theta}}$,
$\boldsymbol{\theta} \sim U[0, 2\pi)$, $\mathbf{r} \sim N(1, 0.2^2)$.
The 1d density ridge of $\mathbf{x}$ is numerically identical with the unit circle.
\Cref{fig:circle}(a-b) illustrates $\hat{C}^{\text{NBB}}$ and $\hat{C}^{\text{B}}$
on a random sample,
and \Cref{fig:circle}(c-d) compares their finite-sample validity and average compute time
over independent samples.
$\hat{C}^{\text{NBB}}$ is valid throughout the range of sample sizes computed,
while the validity of $\hat{C}^{\text{B}}$ slowly improves.
Moreover, $\hat{C}^{\text{B}}$ is computationally costlier than $\hat{C}^{\text{NBB}}$,
due to repeated ridge estimation.
Although repeated mode estimation is also costly,
it is faster than ridge estimation of the same problem size,
and the constructed points in each normal space is only a fraction of the original sample.
Specifically, the computational complexity of $\hat{C}^{\text{B}}$ is $O(n^3 N^2 B)$ per iteration,
from bootstrap repetitions of ridge estimation;
that of $\hat{C}^{\text{NBB}}$ is $O(n k N B)$,
where $O(n k)$ comes from estimating gradient using $k$ constructed data points,
and $O(N B)$ comes from computing for all normal spaces and all bootstrap repetitions.
Note that other population parameters like mean and quantiles
can be estimated much faster than the mode,
so the related inference using NBB will be much faster than in this example,
such as confidence sets of principal manifolds.

\begin{figure}[t]
  \centering
  \includegraphics[width=.9\linewidth]{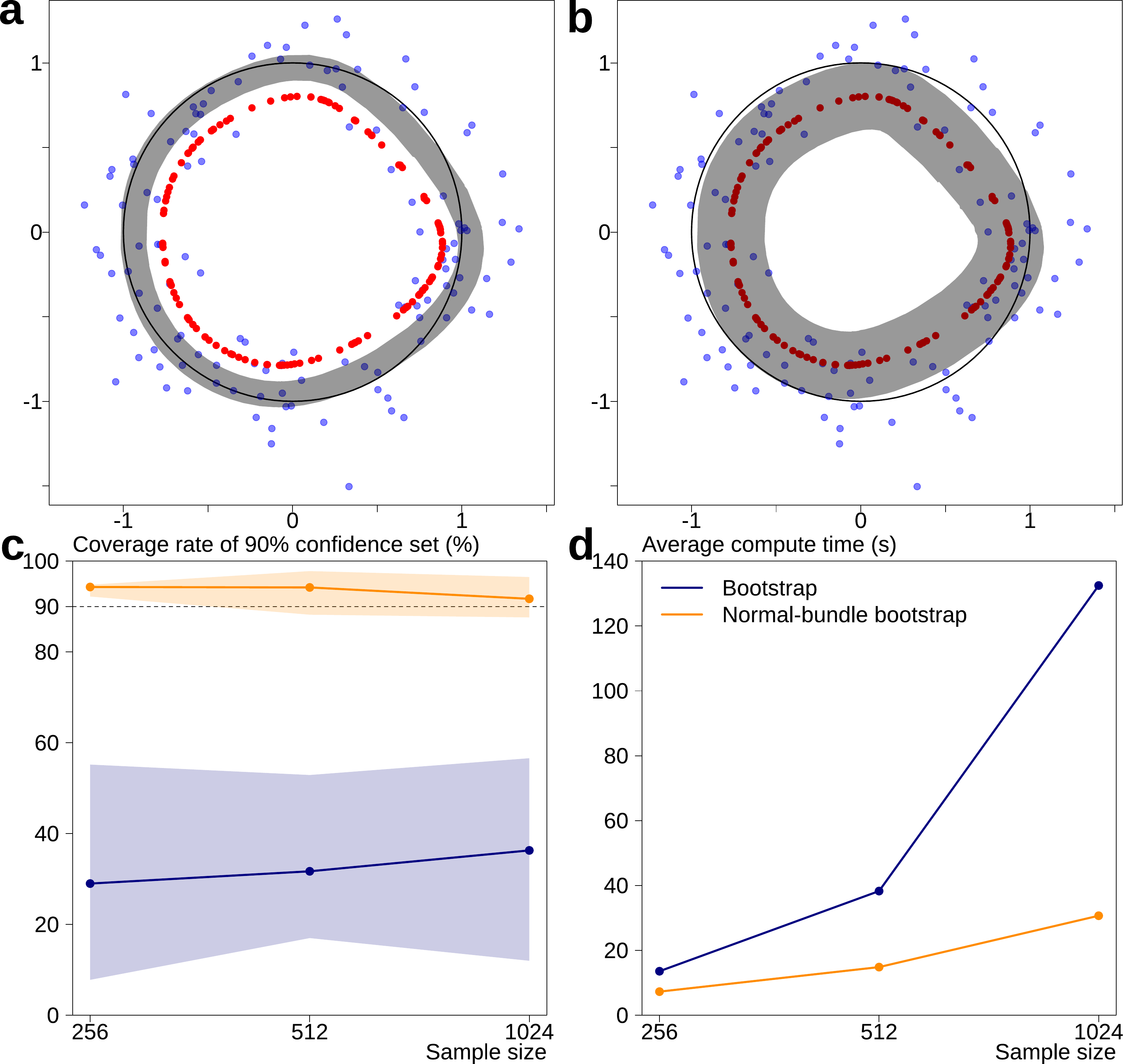}
  \caption{Inference.
    (a-b) 90\% confidence sets of density ridge:
    data (blue), estimated ridge (red), true ridge (black),
    confidence sets (gray) by NBB (a) vs. bootstrap (b). N = 128.
    (c-d) metrics of NBB (orange) and bootstrap (blue) over an ensemble of samples:
    (c) coverage rate, mean (solid line) and 90\% prediction interval (shade);
    (d) average computation time.}
  \label{fig:circle}
\end{figure}

\subsection{Data augmentation: regression by deep neural network}
\label{sub:augmentation}

For machine learning tasks, the data constructed by normal-bundle bootstrap
can be used to augment the original data to avoid overfitting.
The idea behind this is that when the amount of training data
is insufficient for a model not to overfit, but enough for a good estimate of the density ridge,
we can include the NBB constructed data to increase the amount of training data.
Because for each estimated ridge point, the NBB constructed data is balanced around the true ridge
in the sense that their estimated mode is near the true ridge point,
so the augmented training data can resist overfitting to the noises.

Here we consider a regression problem with one input parameter and a functional output.
Let $\mathbb{S}^1 \subset \mathbb{R}^2$ be the unit circle,
$\theta \in \mathbb{R}$ be a rotation angle (with unit $\pi$),
$\tau: \mathbb{S}^1 \mapsto \mathbb{S}^1$ be the map
between initial and final configurations of the circle,
and $f$ be the relationship between $\theta$ and $\tau$ such that $f(\theta) = \tau$.
The task is to learn $f$ from data.
We discretize the circle into a set of $l$ random points with initial angles
$\{\gamma_j \pi\}_{j=1}^l \subset [0, 2 \pi)$.
Under the true model, when $\theta = \theta_i$ their coordinates can be written as
$(x_{ij}, y_{ij}) = (\cos(\pi (\theta_i + \gamma_j)), \sin(\pi (\theta_i + \gamma_j)))$.
Assume that all variables are subject to measurement error such that we can only observe
$\tilde{\boldsymbol\theta} \sim N(\theta, 0.2^2)$ and
$\tilde{\mathbf{x}}_j, \tilde{\mathbf{y}}_j \sim N(\theta, 0.2^2)$, $j \in \{1, \dots, l\}$.
We obtain training data $(\tilde{\theta}_i, (\tilde x_{ij}, \tilde y_{ij})_{j=1}^l)_{i=1}^N$,
and obtain another set of data for validation.
Specifically, we have $l = 8$ and $N = 32$, so the training data is a $32 \times 17$ matrix.

For the neural network, we use a sequential model with four densely connected hidden layers,
which have 256, 128, 64, and 32 units respectively and use the ReLU activation function;
the output layer has 16 units.
We train the network to minimize mean squared error.
For data augmentation, we set $k = 16$ in NBB, and combine the constructed data with training data.
\Cref{fig:wheel} illustrates the original and augmented training data,
and compares the training and validation errors with and without data augmentation.
We can see that without augmentation the network starts to overfit around epoch 100,
while with augmentation the network trains faster, continues to improve over time,
and has a lower error.

\begin{figure}[t]
  \centering
  \includegraphics[width=\linewidth]{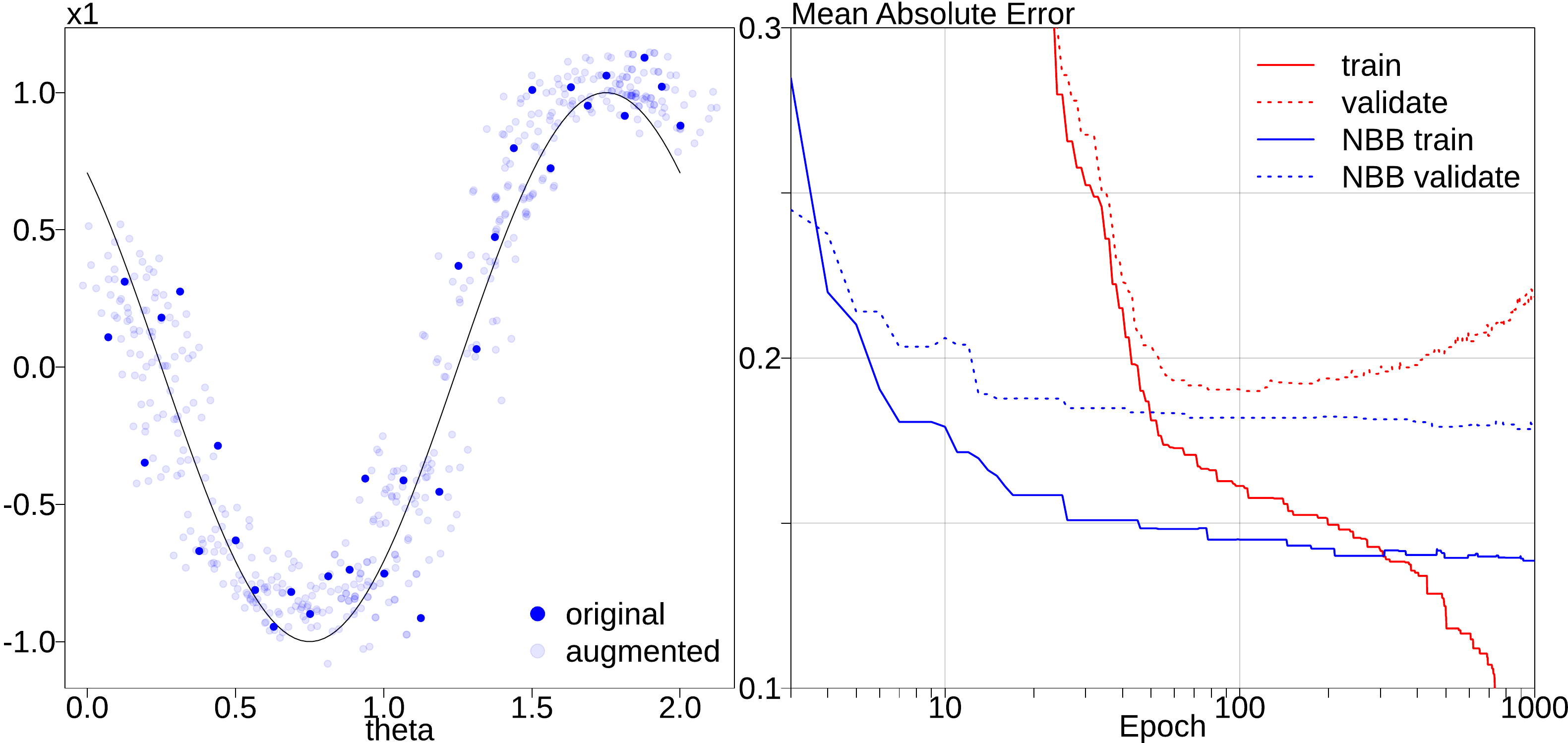}
  \caption{Data augmentation.
    (left) original and augmented data, showing $(\theta, x_1)$ only.
    Noiseless true model in black line.
    (right) training and validation error with and without NBB.}
  \label{fig:wheel}
\end{figure}

\section{Discussion}
\label{sec:discussion}

In this section we discuss the determination of hyper-parameters for NBB:
kernel bandwidth $h$, ridge dimension $d$, and number of neighbors $k$.

Kernel bandwidth $h$ should be selected for optimal estimation of the density ridge.
A good estimate should resemble the shape of the true ridge while bias can be well tolerated,
because with a smooth frame, NBB can correct for bias away from the estimated ridge.
Silverman's rule-of-thumb bandwidth tends to oversmooth the ridge,
because the true density is supposed to have a salient geometric structure
rather than been an isotropic Gaussian.
Maximum likelihood bandwidth tends to be too small,
such that the estimated ridge often has isolated points.
We use an oversmoothing parameter $\alpha$, usually between 2 and 4,
and good estimates can be often obtained across a wide range of $\alpha$ values.
\cite{ChenYC2015b} gave a method to select $h$ that minimizes coverage risk estimates.

Ridge dimension $d$ is often apparent in specific problems.
In low-dimensional problems with $n \le 3$, the structure can often be examined visually.
In regression, $d$ is the number of explanatory variables.
In identifying implicit relations in a system, such as by symbolic regression or sparse regression,
$d = n - c$ is the system's degree of freedom, where $c$ is the number of constraint equations.
If data is generated from a manifold, possible subject to ambient noise,
$d$ is the manifold dimension.
If no external information is available to determine $d$, we can 
use eigengaps of the Hessian $H = \nabla \nabla \log \hat{p}_h$:
find $c \in \{1, \dots, n-1\}$ with the largest $\min\{\lambda_{c+1} - \lambda_c : x \in X\}$.

Number of neighbors $k$ determines the amount of new data constructed by NBB,
and we would prefer it to be as large as possible.
For an estimated ridge point $\hat{r}_i$,
$k_i$ should not exceed the largest local smooth frame containing the point.
And the faster the distributions on normal spaces vary over the ridge, the smaller $k$ should be.
If a global smooth frame can be constructed and the noises are identical across the ridge,
we can set $k = n$.
Typically, we let $k = \varepsilon n$, with $\varepsilon \in (0, 1/2]$.
One criteria is that given $\hat{r}_i$, the normal vectors $[\hat{n}]_{K(i, j)}$ should be uni-modal.
So if mode estimation on $[\hat{n}]_{K(i, j)}$ gives multiple points, $k_i$ should be decreased.

\section{Conclusion}
\label{sec:conclusion}

We introduced normal-bundle bootstrap,
a method to resample data sets with salient geometric structure.
The constructed new data are consistent with the distributions on normal spaces,
and we demonstrated its uses in inference and data augmentation.

\section*{Acknowledgments}
The authors thank Ernest Fokoue of Rochester Institute of Technology for valuable discussions.

\appendix

\section{Algorithms}

Here are some algorithms used in \cref{alg:nbb} for normal-bundle bootstrap.
\texttt{KNN} for $k$-nearest neighbors is a common algorithm and therefore not listed.

\texttt{SCMS} is an implementation of subspace-constrained mean shift \cite{Ozertem2011}
for ridge estimation, where we use the logarithm of a Gaussian kernel density estimate.
Note that density estimate $\hat{p}_h$ in the algorithm input is replaced with $(X, h)$
since we are assuming a Gaussian kernel.
Note that this is naive implementation can be accelerated using local data and Newton-like methods.

\alglanguage{pseudocode}
\begin{algorithm}[h]
  \caption{SCMS$(y; X, h, d, \theta_0 = 0.05)$}
  \label{alg:scms}
  \begin{algorithmic}[1] %
    \Repeat
    \State $z_i \gets (x_i - y) / h$, for $i \in N$
    \State $c_i \gets \exp(-\text{sum}(z_i^2) / 2)$, for $i \in N$
    \State $p_i \gets c_i / \sum_{i=1}^N c_i$, for $i \in N$
    \State $r_{pz,i} \gets \sqrt{p_i} z_i$, for $i \in N$
    \State $s_{pz} \gets \sum_{i=1}^N p_i z_i$
    \State $r_{pz} r_{pz}^{\text{T}} - s_{pz} s_{pz}^{\text{T}} = V \Lambda V$
    \Comment{eigen-decomposition}
    \State $m_c \gets N^{-1} \sum_{i=1}^N c_i$
    \State $m_{cz} \gets N^{-1} \sum_{i=1}^N c_i z_i$
    \State $m \gets h m_{cz} / m_c$ \Comment{mean-shift vector}
    \State $s \gets (I - V_d V_d^{\text{T}}) m$ \Comment{SCMS vector}
    \State $\theta \gets m^T s / \sqrt{\text{sum}(m^2)~\text{sum}(s^2)}$
    \State $y \gets y + s$
    \Until{$\theta > \theta_0$} \Comment{convergence criteria}
    \State \textbf{return} $(y, V, \lambda)$
  \end{algorithmic}
\end{algorithm}

\alglanguage{pseudocode}
\begin{algorithm}[h]
  \caption{SmoothFrame$(R, V_c, c, j = 1)$} %
  \label{alg:sf}
  \begin{algorithmic}[1] %
    \State $(K, D) \gets \text{KNN}(R, N-1)$
    \Comment{index and distance matrices of nearest neighbors}
    \State $k \gets \text{repeat}(1, N)$
    \Comment{$K,D$-indices of nearest unaligned neighbor} %
    \State $E[j] \gets V_c[j]$ \Comment{initial reference orthonormal $c$-frame}
    \State $b \gets 1$ \Comment{$a$-index of the last aligned point}
    \While{$b < N$}
        \State $a[b] \gets j$ \Comment{indices of points in order of alignment}
        \State $\text{replace}(K, j, \text{NULL})$ \Comment{remove indices of aligned points}
        \ForAll{$i$ in $a$} \Comment{maintain the property of $k$}
            \While{$K[i, k[i]]$ is NULL}
                \State $k[i] \gets k[i] + 1$
            \EndWhile
        \EndFor
        
        \State $i \gets a[\text{which.min}(D[a, k[a]])]$
        \Comment{index of aligned point closest to the unaligned}
        \State $j \gets K[i, k[i]]$ \Comment{index of the next point to align}
        \State $\text{Align}(j, i)$ \Comment{align $E[j]$ to $E[i]$}
        \State $b \gets b + 1$ %
    \EndWhile
    \State \textbf{return} $E$
    \Statex
    \Procedure{Align}{$j, i$} %
      \State $E[j] \gets V_c[j]$ \Comment{initial orthonormal $c$-frame}
      \State $\Theta \gets E[j]^{\text{T}} E[i]$ \Comment{cosine matrix to reference frame $E[i]$}
      \State $\Theta = A \Sigma B^{\text{T}}$ \Comment{singular value decomposition}
      \State $Q \gets A B^{\text{T}}$ \Comment{rotation matrix}
      \State $E[j] \gets E[j] Q$ \Comment{aligned orthonormal $c$-frame}
    \EndProcedure
\end{algorithmic}
\end{algorithm}

\texttt{SmoothFrame} constructs smooth frames of the normal bundle of an estimated density ridge,
where procedure \texttt{Align} adapts the moving frame algorithm \cite{Rheinboldt1988}
for the normal bundle.
This algorithm recursively aligns the nearest unaligned point,
which is ``optimized'' for stability but not for speed.
It might be faster if using one reference frame for a neighborhood,
such that the neighborhoods cover the data set.
Moreover, when $c$ is large, only the top among the bottom-$c$ eigenvectors
are significant to correct for biases introduced in ridge estimation,
so a smooth subframe of the normal bundle suffice, which saves computation and storage.
For the remaining normal directions, assuming negligible bias to the true ridge
and radial symmetry (in addition to unimodality) of noise distribution,
one may bootstrap the norm of the residual noise and multiply it with a random residual direction.

This algorithm, as written, assumes that a smooth global frame exists
for the normal bundle of the estimated density ridge, or equivalently,
that the the normal bundle is trivial.
The normal bundle of a density ridge does not need to be trivial, or not even orientable.
Consider the uniform distribution on a Mobius band in the Euclidean 3-space,
under a small additive Gaussian noise, the 2d density ridge includes the band,
so the estimated density ridge approximates the band, which is non-orientable.
Therefore, an (estimated) density ridge does not need to admit
a smooth global frame for its normal bundle.
In case the normal bundle is not trivial,
several smooth frames need to be constructed to cover the ridge.
In terms of computation, one needs to run this algorithm on several subsets of the estimated ridge,
such that for every point on the ridge,
there is a frame that contains enough neighbors to the point.

On the other hand, for a constraint manifold, i.e. regular level set $\mathcal{M} = F^{-1}(0)$,
its normal bundle is trivial (see \cite[10-18]{Lee2012}),
admits a smooth global frame (see \cite[10.20]{Lee2012}),
and it is orientable (see \cite[15-8]{Lee2012}); in particular,
the Jacobian $J^{\text{T}}(x)$ is a smooth/$C^{k-1}$ global frame for $N \mathcal{M}$.
By QR decomposition where $R$ has all positive diagonal entries,
$Q(x)$ is smooth/$C^{k-1}$ orthonormal global frame for $N \mathcal{M}$.
Because non-orientable submanifolds of Euclidean spaces (e.g. the Mobius band)
do not have global frames, they cannot be constraint manifolds.

\section{List of Symbols} 
Here we provide the system of symbols we used in this article.

Manifold:

\begin{itemize}
\item $\mathbb{R}^n$, Euclidean n-space;
\item $(\mathcal{M}, g)$, Riemannian submanifold of dimension $d$ with induced Riemannian metric;
\item $\mu_0^\mathcal{M}(A) = \int_A d V_g / \int_{\mathcal{M}} d V_g$, normalized Hausdorff measure,
a reference probability measure on the submanifold;
\item $p^\mathcal{M}$, $\mu^\mathcal{M} = p^\mathcal{M} \mu_0^\mathcal{M}$, probability density/measure
on the submanifold;
\item $T_p \mathcal{M}$, $N_p \mathcal{M}$, tangent/normal space at a point on the submanifold;
\item $N \mathcal{M} = \sqcup_{p\in\mathcal{M}} N_p \mathcal{M}$, normal bundle of the submanifold;
\end{itemize}

Fiber bundle:

\begin{itemize}
\item $(B, \pi, \Phi)$, fiber bundle, a tuple of total space, projection, and trivialization;
\item $\mathcal{M} = \pi(B)$, base space of the bundle, a manifold;
\item $\mathcal{F}_r = \pi^{-1}(r)$, fiber over a point on the base space;
\item $B|_S = \pi^{-1}(S)$, restriction of a fiber bundle to a subset of its base space;
\item $\Phi(x) = (\pi(x), x - \pi(x))$, trivialization of the normal bundle;
\item $\mathscr{D} = \Phi(B) \subset N \mathcal{M}$, trivialized normal bundle;
\item $\mu^\mathcal{M}_{\perp} = \mu \circ \pi^{-1}$, measure induced by projection on the base space;
\item $\mu^\mathcal{F} = p^\mathcal{F} \mu_0^\mathcal{F}$, $p^\mathcal{F} = \frac{p}{\int_\mathcal{F} p~d V_g} \bigg{|}_\mathcal{F}$,
  measure induced on each fiber, and its density function;
\end{itemize}

Dynamical system:

\begin{itemize}
\item $g(x) = \nabla p(x)$, $H(x) = \nabla \nabla p(x)$, gradient/Hessian of density function;
\item $V$, $\Lambda = \text{diag}(\lambda)$, matrices of eigenvectors/eigenvalues of the Hessian;
\item $U = V_d V_d^{\text{T}}$, $L = I - U$, orthonormal frames of the top-$d$/bottom-$c$
eigenvectors of the Hessian;
\item $\mathcal{R} = \{x \in \mathbb{R}^n : \lambda_c(x) < 0, L(x) g(x) = 0\}$, density ridge
of dimension $d$;
\item $v(x) = L(x) g(x)$, subspace-constrained gradient field;
\item $\phi(t, x)$, $\phi^t(x)$, semi-flow generated by $v$, and its time-$t$ map;
\item $(\mathcal{B}, \phi^\infty)$, normal bundle of the density ridge
(basin of attraction as total space, and time-infinite map as projection);
\item $U$, neighborhood on density ridge;
\end{itemize}

Algorithm:

\begin{itemize}
\item $X$, data set of $N$ points;
\item $\hat{p}_h(x)$, estimated density function with kernel bandwidth $h$;
\item $m(x) = h^2 \hat{g}_h(x) / \hat{p}_h(x)$, mean-shift vector based on Gaussian kernel;
\item $s(x) = L(x) m(x)$, subspace-constrained mean-shift vector;
\item $\alpha$, smoothing factor;
\item $k$, number of nearest neighbors;
\item $\mathbf{x}_i$, $\hat{\mathbf{r}}_i = \pi(\mathbf{x}_i)$,
$\hat{\mathbf{n}}_i = \mathbf{x}_i - \hat{\mathbf{r}}_i$,
$\tilde{\mathbf{x}}_{ij} = \hat{\mathbf{r}}_i + \hat{\mathbf{n}}_{i_j}$, data point, ridge point,
normal vector, and constructed data point;
\end{itemize}

\bibliographystyle{siamplain}
\bibliography{nbb.bib}

\end{document}